\newcommand\restr[2]{#1_{\hspace{.3ex}\rule{.01pt}{.7em}\hspace{.3ex}#2}}
\newcommand{\sparsity}{k}
\newcommand{\constraint}{\Sigma_\sparsity}
\newcommand{\nonconvex}{\constraint}
\newcommand{\proj}{\mathcal{P}}
\newcommand{\sss}{\mathcal{S}} 
\DeclareMathOperator{\supp}{supp}
\DeclareMathAlphabet{\mathbbb}{U}{bbold}{m}{n}
\newcommand{\ones}{\mathbbb 1}                
\newcommand{\order}{\mathcal{O}}
\newcommand{\Rmn}{\R^{m \times n}}
\def\S{\Sigma}
\newcommand{\obs}{\mathbf y}
\newcommand{\X}{\mathbf X}
\newcommand{\linmap}{\boldsymbol{\mathcal{A}}}
\newcommand{\numsam}{m}
\newcommand{\noise}{\boldsymbol{\varepsilon}}
\newcommand{\vectornorm}[1]{\|#1\|}
\newcommand{\Ast}{\star} 
\newcommand{\C}{\mathbb C}  
\newcommand{\bigO}{\mathcal O}
\newcommand{\abs}[1]{\left\vert#1\right\vert}
\newcommand{\observation}{{\bf w}}  
\newcommand{\signal}{\boldsymbol{\beta}}
\newcommand{\entry}{\beta}
\newcommand{\bestsignal}{\signal^\Ast}
\newcommand{\dimension}{p}
\newtheorem{problem}{Problem}
\newtheorem{remark}{Remark}
\newtheorem{proposition}{Proposition}[section]
\newtheorem{definition}{Definition}[section]
\newtheorem{theorem}{Theorem}
\newtheorem{lemma}{Lemma}
\newcommand{\bitem}{\begin{itemize}}
\newcommand{\eitem}{\end{itemize}}
\DeclareMathOperator*{\argmax}{argmax}        
\DeclareMathOperator*{\argmin}{argmin}
\DeclareMathOperator*{\minimize}{minimize}
\newcommand{\beqn}{\begin{equation}}
\newcommand{\eeqn}{\end{equation}}
\newcommand{\balign}{\begin{align}}
\newcommand{\ealign}{\end{align}}
\newcommand{\R}{ \mathbb{R} }     
\def\proj { \mathcal{P} } 
\DeclareMathOperator{\tr}{tr}
\icmltitlerunning{Sparse projections onto the simplex}
\begin{document} 

\twocolumn[
\icmltitle{Sparse projections onto the simplex}


\icmlauthor{Anastasios Kyrillidis}{anastasios.kyrillidis@epfl.ch}
\icmlauthor{Stephen Becker}{stephen.becker@upmc.fr}
\icmlauthor{Volkan Cevher}{volkan.cevher@epfl.ch}
\icmlauthor{Christoph Koch}{christoph.koch@epfl.ch}

\icmlkeywords{machine learning, ICML, sparsity, projections, simplex, quantum tomography}

\vskip 0.3in
]
\begin{abstract}
Most learning methods with rank or sparsity constraints use convex relaxations, which lead to optimization with the nuclear norm or the $\ell_1$-norm. However, several important learning applications cannot benefit from this approach as they feature these convex norms as constraints in addition to the non-convex rank and sparsity constraints. In this setting, we derive efficient sparse projections onto the simplex and its extension, and illustrate how to use them to solve high-dimensional learning problems in quantum tomography, sparse density estimation and portfolio selection with non-convex constraints.
\end{abstract}

\section{Introduction}
We study the following \emph{sparse} Euclidean projections:
\begin{problem} \label{problem:positive}(Simplex)
Given $\observation \in \mathbb{R}^{\dimension}$,
find a Euclidean projection of $\observation$ onto the intersection of $\sparsity$-sparse vectors  $\constraint = \big \lbrace \signal \in \mathbb{R}^{\dimension} :  \left|\lbrace i: \entry_i \neq 0 \rbrace\right|\le \sparsity  \big \rbrace$ and
the  simplex $\Delta_{\lambda}^+ = \big \lbrace \signal \in \mathbb{R}^{\dimension} : \entry_i \geq 0, ~\sum_i \entry_i = \lambda \big \rbrace$:
\begin{equation}  \label{eq:prob:2}
\mathcal{P}(\observation) \in \argmin_{\signal:\signal \in \constraint \cap \Delta_\lambda^+ } \vectornorm{\signal - \observation}_2.
\end{equation}
\end{problem}
\begin{problem} \label{problem:general}(Hyperplane)
    Replace $\Delta_{\lambda}^+$ in \eqref{eq:prob:2} with the hyperplane constraint $\Delta_{\lambda}= \big \lbrace \signal \in \mathbb{R}^{\dimension} : ~\sum_i \entry_i = \lambda \big \rbrace$.
\end{problem}

\emph{We prove that it is possible to compute such projections in quasilinear time via simple greedy algorithms.}

Our motivation with these projectors is to address important learning applications where the standard sparsity/low-rank heuristics based on the $\ell_1$/nuclear-norm are either given as a constraint or conflicts with the problem constraints. For concreteness, we highlight quantum tomography, density learning, and Markowitz portfolio design problems as running examples. We then illustrate provable non-convex solutions to minimize quadratic loss functions
 \begin{equation}\label{eq: loss function}
   f(\signal) := \|\obs-\linmap(\signal)\|^2
 \end{equation} 
subject to the constraints in Problem 1 and 2 with our projectors.  In \eqref{eq: loss function}, we assume that $\obs \in \mathbb{R}^{m}$ is given and the (known) operator $\linmap: \mathbb{R}^{\dimension}\rightarrow \mathbb{R}^{\numsam}$ is linear. 

For simplicity of analysis, our minimization approach is based on the projected gradient descent algorithm: 
\begin{equation}\label{eq: projected gradient}
  \signal^{i+1} = \mathcal{P}(\signal^i - \mu^i \nabla f(\signal^i)),
\end{equation}
where $\signal^i$ is the $i$-th iterate, $\nabla f(\cdot)$ is the gradient of the loss function, $\mu^i$ is a step-size, and $\mathcal{P}(\cdot)$ is based on Problem 1 or 2. When the linear map $\linmap$ in \eqref{eq: loss function} provides bi-Lipschitz embedding for the constraint sets, we can derive rigorous approximation guarantees for the algorithm \eqref{eq: projected gradient}; cf., \cite{garg2009gradient}.\footnote{Surprisingly, a recent analysis of this algorithm along with similar assumptions indicates that rigorous guarantees can be obtained for minimization of general loss functions other than the quadratic \cite{bahmani2011greedy}.} 

To the best of our knowledge, explicitly sparse Euclidean projections onto the simplex and hyperplane constraints have not been considered before.
The closest work to ours is the paper \cite{clash}. In \cite{clash}, the authors propose an alternating projection approach in regression where the true vector is already sparse and within a convex norm-ball constraint. In contrast, we consider the problem of projecting an {\em arbitrary} given vector onto convex-based and sparse constraints {\em jointly}. 

At the time of this submission, we become aware of \cite{pilanci2012recovery}, which considers cardinality regularized loss function minimization subject to simplex constraints. Their convexified approach relies on solving a lower-bound to the objective function and has $\bigO(p^4)$ complexity, which is not scalable. We also note that \emph{regularizing} with the cardinality constraints is generally easier: e.g., our projectors become simpler.  


\paragraph{Notation:} Plain and boldface lowercase letters represent scalars and vectors, resp.
The $i$-th entry of a vector $\observation$ is $w_i$, and $[w_i]_+=\max(w_i,0)$, while $\signal^i$ is the model estimate at the $i$-th iteration of an algorithm.
Given a set $\sss \subseteq \mathcal{N} = \{1,\ldots,\dimension \}$, the complement $\sss^c$ is defined with respect to $\mathcal{N}$, and the cardinality is $\abs{\sss}$. The support set of $ \observation $ is 
$\text{supp}(\observation) = \lbrace i: w_i \neq 0 \rbrace $. Given
a vector $ \observation \in \R^\dimension $, $ \observation_{\sss} $ is
the projection (in $\R^\dimension$) of $\observation$ onto $S$, i.e.~$\left(\observation_{\sss}\right)_{\sss^c}= 0 $,
whereas $ \restr{\observation}{\sss} \in \R^{|\sss|}$ is $\observation$ limited to $\sss$ entries.
The all-ones column vector is $\ones$, with dimensions apparent from the context. 
We define $\constraint$ as the set of all $\sparsity$-sparse subsets of $\mathcal{N}$,
and we sometimes write $\signal \in \constraint$ to mean $\supp(\signal)\in \constraint$. 
The trace of a matrix $\X$ is written $\tr(\X)$.

\section{Preliminaries}\label{sec: prelim}
\paragraph{Basic definitions:} 
Without loss of generality, assume $\observation$ is sorted in descending order, so $w_1$ is the largest element.
We denote $\proj_{\lambda^+}$ for the (convex) Euclidean projector onto the standard simplex $\Delta_{\lambda}^+$, and $\proj_{\lambda}$ for its extension to $\Delta_{\lambda}$. 
The (non-convex) Euclidean projector onto the set $\constraint$ is  $\proj_{\constraint}$, which retains the $\sparsity$-largest in magnitude elements.
In contrast to $\proj_\lambda$, the projection $\proj_{\constraint}$ need not be unique.

\begin{definition}[Operator $\proj_{L_{\sparsity}}$] \label{remark:greedyBasisAlgo}
We define $\proj_{L_{\sparsity}}(\observation)$ as the operator that keeps the $\sparsity$-largest entries of $\observation$ (not in magnitude) and sets the rest to zero. This operation can be computed in $\order(\dimension \min( \sparsity, \log(\dimension)))$-time. 
 \end{definition}

\begin{definition}[Euclidean projection $\proj_{\lambda^+}$] \label{def: simplex proj}The projector onto the simplex is given by
\begin{equation}
  \begin{split} \nonumber\label{eq:projSimplex}
    (\proj_{\lambda^+}(w) )_i = [w_i - \tau]_+,
    \,\text{where}~\tau := \frac{1}{\rho}\left( \sum_{i=1}^\rho w_i - \lambda\right )    
    \end{split}
\end{equation} for $\rho := \max\{ j : w_j > \frac{1}{j}(\sum_{i=1}^jw_i - \lambda) \}.$
\end{definition}

\begin{definition}[Euclidean projection $\proj_{\lambda}$]
    The projector onto the extended simplex is given by
\begin{equation} \nonumber \label{eq:projGeneralSimplex}
    (\proj_\lambda(w) )_i = w_i - \tau, \quad\text{where}\quad
\tau = \frac{1}{\dimension}\left( \sum_{i=1}^\dimension w_i - \lambda \right).
\end{equation}
\end{definition}

\begin{definition}[Restricted isometry property (RIP) \cite{candes06robust}] \label{def:sRIP} A linear operator $ \linmap: \mathcal{R}^{\dimension} \rightarrow \mathcal{R}^{\numsam} $ satisfies the $\sparsity$-RIP with constant $ \delta_{\sparsity}\in (0,1)$ if
\begin{equation}\label{eq:RIP}
  1-\delta_{\sparsity} \leq \vectornorm{\linmap (\signal)}_2^2/\vectornorm{\signal}_2^2 \leq 1+\delta_{\sparsity}, ~~\forall \signal \in \constraint.
\end{equation} 
\end{definition}

\paragraph{Guarantees of the gradient scheme \eqref{eq: projected gradient}:} Let $\obs = \linmap (\bestsignal) + \noise \in \mathbb{R}^\numsam$, $(\numsam \ll \dimension)$, be a generative model where $\noise$ is an additive perturbation term and $\bestsignal$ is the $k$-sparse true model generating $\obs$. If the RIP assumption \eqref{eq:RIP} is satisfied, 
then the projected gradient descent algorithm in \eqref{eq: projected gradient} features the following invariant on the objective ~\cite{garg2009gradient}:
\begin{align}
f(\signal^{i+1}) \leq \frac{2\delta_{2\sparsity}}{1-\delta_{2\sparsity} }f(\signal^i) + c_1 \vectornorm{\noise}_2,
\end{align} for $c_1 > 0$ and stepsize $\mu^i = \frac{1}{1+\delta_{2\sparsity}}$. 
Hence, for $\delta_{2\sparsity}<1/3$, the iterations of the algorithm are contractive and \eqref{eq: projected gradient} obtains a good approximation on the loss function. In addition, \cite{foucart2010sparse} shows that we can guarantee approximation on the true model via 
\begin{align}
\vectornorm{\signal^{i+1} - \bestsignal}_2 \leq 2\delta_{3\sparsity}\vectornorm{\signal^{i} - \bestsignal}_2 + c_2 \vectornorm{\noise}_2,
\end{align} for $c_1 > 0$ and $\mu^{i} = 1$. Similarly, when $\delta_{3\sparsity}<1/2$, the iterations of the algorithm are contractive.  Different step size $\mu^i$ strategies result in different guarantees; c.f., \cite{garg2009gradient, foucart2010sparse, KyrillidisCevherRecipes} for a more detailed discussion. Note that to satisfy a given RIP constant $\delta$, random matrices with sub-Gaussian entries require $m=\bigO\left(\sparsity \log(\dimension/\sparsity)/\delta^2\right)$. In low rank matrix cases, similar RIP conditions for \eqref{eq: projected gradient} can be derived with approximation guarantees; cf., \cite{SVP}. 


\section{Underlying discrete problems}
Let $\bestsignal$ be a projection of $\observation$ onto $\nonconvex \cap \Delta_\lambda^+$ or $\nonconvex \cap \Delta_\lambda$. We now make the following elementary observation:
\begin{remark} \label{remark:equiv}
The Problem 1 and 2 statements can be equivalently transformed into the following nested minimization problem: $\lbrace\sss^{\Ast}, \bestsignal\rbrace
  = $
\begin{align}\nonumber
\argmin_{\sss: \sss \in \constraint }  
\Big[ \argmin_{\substack{\signal: \signal_\sss \in \Delta_{\lambda}^+ \text{~or~} \Delta_{\lambda},\\ \signal_{\sss^c} = 0}} \vectornorm{\restr{(\signal - \observation)}{\sss}}_2^2 + \vectornorm{\restr{\observation}{\sss^c}}_2^2\Big],
\end{align}
where $\supp(\bestsignal) = \sss^{\Ast}$ and $\bestsignal \in \Delta_{\lambda}^+$ or $\Delta_{\lambda}$.
\end{remark}
Therefore, given  $\sss^\Ast=\supp(\bestsignal)$, we can find $\bestsignal$ by projecting $\observation_{\sss^\Ast}$ onto
$\Delta_{\lambda}^+$ or $\Delta_{\lambda}$ within the $\sparsity$-dimensional space. Thus, the difficulty is finding $\sss^\Ast$. Hence, we split the problem into the task of finding the support and then finding the values on the support. 

\subsection{Problem 1}
Given any support $\sss$, the unique corresponding estimator is $\restr{\widehat{\signal}}{\sss} = \proj_{\lambda^+}(\restr{\observation}{\sss})$. We conclude that $\bestsignal$ satisfies ${\bestsignal}_{(\sss^{\Ast})^c} = 0$ and
$    \restr{\bestsignal}{\sss^\Ast} = \proj_{\lambda^+}(\restr{\observation}{\sss^\Ast})$, 
and 
\begin{align}  \label{eq:modularAA} 
    \sss^{\Ast} &\in \argmin_{\sss: \sss \in \constraint} \vectornorm{\proj_{\lambda^+}(\restr{\observation}{\sss}) - \restr{\observation}{\sss}}^2_2 + \vectornorm{\restr{\observation}{\sss^c}}_2^2 \nonumber\\ 
        &= \argmax_{\sss: \sss \in \constraint} F_+(\sss)
\end{align}
where $F_+(\sss) := \sum_{i \in \sss} \left( w_i^2 - ((\proj_{\lambda^+}(\restr{\observation}{\sss}))_i - w_i)^2\right)$. 

This set function can be simplified to
\begin{equation}\label{eq:setplus}
  F_+(\sss) = \sum_{i\in\sss} (w_i^2 - \tau^2),
\end{equation} 
where $\tau$ (which depends on $S$) is as in Lemma \ref{biggerThanTau}.

\begin{lemma} \label{biggerThanTau}
Let $\signal = \proj_{\lambda^+}(\observation)$ where $\beta_i = [w_i-\tau]_+$.
Then, $w_i \ge \tau$ for all $i\in \sss=\supp(\signal)$.
Furthermore, $\tau = \frac{1}{|\sss|}\left( \sum_{i\in \sss} w_i - \lambda \right)$.
\end{lemma}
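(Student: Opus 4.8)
The plan is to read both claims directly off the explicit thresholding form $\beta_i = [w_i-\tau]_+$ of Definition~\ref{def: simplex proj}, together with the fact that $\signal=\proj_{\lambda^+}(\observation)$ is, by construction, a point of $\Delta_\lambda^+$. The first claim is immediate: if $i\in\sss=\supp(\signal)$ then $\beta_i\neq 0$, so $[w_i-\tau]_+>0$, which forces $w_i-\tau>0$, i.e.\ $w_i>\tau\ge\tau$. (In fact one gets the strict inequality $w_i>\tau$ on $\sss$, which is exactly what is needed to pass to the simplified form~\eqref{eq:setplus}.)

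For the value of $\tau$, I would exploit the case split induced by the threshold: for $i\in\sss$ we have $w_i>\tau$, so the positive part is active and $\beta_i = w_i-\tau$; for $i\notin\sss$ we have $\beta_i=0$. Since $\signal\in\Delta_\lambda^+$ we have $\sum_i\beta_i=\lambda$, hence
\[
\lambda \;=\; \sum_{i\in\sss}\beta_i \;=\; \sum_{i\in\sss}(w_i-\tau)\;=\;\Big(\sum_{i\in\sss}w_i\Big)-|\sss|\,\tau .
\]
Since $\lambda>0$ we have $\sss\neq\emptyset$ (one cannot have all $\beta_i=0$ summing to $\lambda$), so $|\sss|\ge 1$ and rearranging yields $\tau = \tfrac{1}{|\sss|}\big(\sum_{i\in\sss}w_i-\lambda\big)$, as claimed.

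The only genuinely substantive point is the one buried in ``$\proj_{\lambda^+}(\observation)\in\Delta_\lambda^+$'': one should check that the closed form of Definition~\ref{def: simplex proj}, with $\tau=\tau_\rho:=\tfrac1\rho(\sum_{i\le\rho}w_i-\lambda)$, really is the Euclidean projection. The clean route is to show the support is exactly the prefix $\sss=\{1,\dots,\rho\}$: for $i\le\rho$ the descending order of $\observation$ gives $w_i\ge w_\rho>\tau_\rho$ (the last inequality being the defining inequality of $\rho$), while for $j>\rho$ maximality of $\rho$ gives $w_{\rho+1}\le\tau_{\rho+1}$, and the telescoping identity $(j{+}1)\tau_{j+1}=j\tau_j+w_{j+1}$ (equivalently $\tau_{j+1}-\tau_j=(w_{j+1}-\tau_j)/(j{+}1)$) converts this into $w_{\rho+1}\le\tau_\rho$, whence $w_j\le w_{\rho+1}\le\tau_\rho$ for all $j>\rho$. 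Then $\sum_i\beta_i=\sum_{i=1}^\rho(w_i-\tau_\rho)=\lambda$ by the very definition of $\tau_\rho$, so the formula does land on the simplex (and is the projection by strong convexity/uniqueness). This prefix identification is the step that takes a little bookkeeping; granting it — or simply granting that $\proj_{\lambda^+}$ maps into $\Delta_\lambda^+$, as the text states — the two assertions of the lemma are the one-liners above.
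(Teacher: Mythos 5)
Your proof is correct and follows essentially the same route as the paper, which simply asserts that both claims follow ``directly from the definition of $\tau$'' in Definition~\ref{def: simplex proj} and offers only the intuition about the threshold. You have fleshed out what the paper leaves implicit --- the strict inequality $w_i>\tau$ on the support, the rearrangement of $\sum_{i\in\sss}(w_i-\tau)=\lambda$, and the verification that the support is the prefix $\{1,\dots,\rho\}$ via the telescoping identity --- all of which is sound and, if anything, more complete than the published argument.
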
 
\begin{proof} Directly from the definition of $\tau$ in Definition \ref{def: simplex proj}. The intuition is quite simple: the ``threshold'' $\tau$ should be smaller than the smallest entry in the selected support, or we unnecessarily shrink the coefficients that are larger without introducing any new support to the solution. Same arguments apply to inflating the coefficients to meet the simplex budget.\end{proof}
\subsection{Problem 2}
Similar to above, we conclude that $\bestsignal$ satisfies
$    \restr{\bestsignal}{\sss^{\Ast}} = \proj_{\lambda}(\restr{\observation}{\sss^{\Ast}})$ and $\restr{\bestsignal}{(\sss^{\Ast})^c} = 0$, where
\begin{align}  \label{eq:modularA}
    \sss^{\Ast} &\in \argmin_{\sss: \sss \in \constraint} \vectornorm{ {\bf z} - \observation}_2 = \argmax_{\sss: \sss \in \constraint} F(\sss)
\end{align} where ${\bf z} \in \mathbb{R}^p$ with $\restr{{\bf z}}{\sss} = \proj_{\lambda}(\restr{\observation}{\sss})$ and $\restr{{\bf z}}{\sss^c} = 0$ and 
$F(\sss) := \left( \sum_{i \in \sss}  w_i^2 \right) - \frac{1}{|S|}(\sum_{i \in \sss}  w_i-\lambda)^2$. 

\section{Sparse projections onto \texorpdfstring{$\Delta_\lambda^+$ and $\Delta_\lambda$}{Delta+ and Delta}}\label{sec: algos}
Algorithm~\ref{alg:1} below suggests an obvious greedy approach for the projection onto $\nonconvex \cap \Delta_\lambda^+$. We select the set $\sss^\Ast$ by naively projecting $\observation$ as $\proj_{L_{\sparsity}}(\observation)$. Remarkably, this gives the correct support set for Problem~\ref{problem:positive}, as we prove in Section \ref{sec:positivesimplex}. We call this algorithm the greedy selector and simplex projector  (GSSP). The overall complexity of GSSP is dominated by the sort operation in $p$-dimensions.

\begin{algorithm}[t]
    \caption{GSSP}
    \label{alg:1}
    \begin{algorithmic}[1]
        \STATE $\sss^\Ast = \supp(\proj_{L_{\sparsity}}(\observation)) $ \hfill \COMMENT{Select support}
        \STATE $\restr{\signal}{\sss^\Ast} = \proj_{\lambda^+}( \restr{\observation}{\sss^\Ast}), \restr{\signal}{(\sss^{\Ast})^c} = 0 $  \hfill \COMMENT{Final projection}
    \end{algorithmic}
\end{algorithm}
\begin{algorithm}[t] 
    \caption{GSHP}
    \label{alg:2}
    \begin{algorithmic}[1]
        \STATE $\ell=1$ , $\sss = j, \quad j \in \arg\max_i \left[\lambda w_i\right]$\hfill \COMMENT{Initialize}
        \STATE Repeat: ${\ell \leftarrow \ell+1}$, ${\sss\leftarrow \sss \cup j}$, where \\ ${\qquad j\in \arg\max_{i\in \mathcal{N}\setminus \mathcal{S}} \left|w_i - \frac{\sum_{j \in \sss}w_j-\lambda}{\ell-1}\right|}$   \hfill \COMMENT{Grow}
        \STATE Until $\ell= k$, set $\sss^\Ast\leftarrow \sss$ \hfill \COMMENT{Terminate}
        \STATE $\restr{\signal}{\sss^\Ast} = \proj_{\lambda}( \restr{\observation}{\sss^\Ast}),
        \;  \restr{\signal}{(\sss^{\Ast})^c} = 0 $ \hfill \COMMENT{Final projection}
    \end{algorithmic}
\end{algorithm}

Unfortunately, the GSSP fails for Problem~\ref{problem:general}.
As a result, we propose Algorithm~\ref{alg:2} for the $\nonconvex \cap \Delta_\lambda$ case which is non-obvious. The algorithm first selects the index of the largest element that has the same sign as $\lambda$. It then grows the index set one at a time by finding the farthest element from the current mean, as adjusted by lambda. Surprisingly, the algorithm finds the correct support set, as we prove in  Section \ref{sec: hyperplane}. We call this algorithm the greedy selector and hyperplane projector (GSHP), whose overall complexity is similar to GSSP.

\vspace{-3mm}
\section{Main results}
\begin{remark}\label{remark:|S|=s} 
    When the symbol $\sss$ is used as $\sss=\supp(\bar{\signal})$ 
    for any $\bar{\signal}$, 
    then if $|\sss| < \sparsity$, we enlarge $\sss$ until
    it has $\sparsity$ elements by taking the first $\sparsity - |\sss|$ elements
    that are not already in $\sss$, and setting $\bar{\signal}=0$ on these elements. The lexicographic approach is used to break ties when there are multiple solutions.
\end{remark}
\vspace{-1mm}
\vspace{-2mm}
\subsection{Correctness of GSSP} \label{sec:positivesimplex}

\begin{theorem}\label{thm:simplexStandard}
    Algorithm~\ref{alg:1} exactly solves Problem~\ref{problem:positive}.
\end{theorem}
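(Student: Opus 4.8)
The plan is to lean on the reduction in Remark~\ref{remark:equiv}: once a support $\sss^{\Ast}$ is fixed, the unique optimal values on it are $\proj_{\lambda^+}(\restr{\observation}{\sss^{\Ast}})$, which is exactly what line~2 of GSSP outputs. So the whole statement reduces to showing that the index set chosen in line~1, namely $\supp(\proj_{L_\sparsity}(\observation)) = \{1,\dots,\sparsity\}$ (using that $\observation$ is sorted in descending order, per the preliminaries, together with Definition~\ref{remark:greedyBasisAlgo}), is an optimal support, i.e.\ $F_+(\{1,\dots,\sparsity\}) \ge F_+(\sss)$ for every $\sss \in \constraint$. Throughout I assume $\lambda \ge 0$, so that $\constraint \cap \Delta_\lambda^+$ is a nonempty compact set and a minimizer exists.

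I would establish the support claim by an exchange argument on an actual minimizer rather than on the set function $F_+$ directly (this sidesteps the bookkeeping of which coordinates of $\restr{\observation}{\sss}$ survive the soft-threshold $\tau$ of Lemma~\ref{biggerThanTau}). Let $\signal^{\star}$ solve Problem~\ref{problem:positive} and put $\sss = \supp(\signal^{\star})$, so $\entry^{\star}_i > 0$ for $i \in \sss$, $\sum_{i\in\sss}\entry^{\star}_i = \lambda$, and $|\sss|\le\sparsity$. Suppose $\sss \ne \{1,\dots,|\sss|\}$; equivalently $i_0 := \max \sss > |\sss|$, so some $j \notin \sss$ satisfies $j < i_0$ and hence $w_j \ge w_{i_0}$. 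Define $\signal'$ by moving all mass off coordinate $i_0$ onto $j$: $\entry'_j = \entry^{\star}_{i_0}$, $\entry'_{i_0}=0$, and $\entry'_\ell = \entry^{\star}_\ell$ otherwise. Then $\signal'$ is feasible (same cardinality, nonnegative, same sum $\lambda$) and
\begin{equation}\nonumber
\vectornorm{\signal' - \observation}_2^2 - \vectornorm{\signal^{\star} - \observation}_2^2 = (\entry^{\star}_{i_0}-w_j)^2 + w_{i_0}^2 - (\entry^{\star}_{i_0}-w_{i_0})^2 - w_j^2 = 2\,\entry^{\star}_{i_0}(w_{i_0}-w_j) \le 0,
\end{equation}
so $\signal'$ is again optimal. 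The integer potential $\sum_{i \in \supp(\signal')} i$ has strictly decreased, so after finitely many such swaps one reaches an optimal solution whose support is an initial segment $\{1,\dots,m\}$ with $m = |\sss| \le \sparsity$; padding it to $\sparsity$ elements as in Remark~\ref{remark:|S|=s} (with value $0$ on the new coordinates) changes neither feasibility nor the objective. Hence there is an optimal solution supported on $\{1,\dots,\sparsity\}$.

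Then I would observe that GSSP returns this very solution. Among all $\signal$ with $\supp(\signal)\subseteq\{1,\dots,\sparsity\}$, $\signal\ge 0$, $\sum_i\entry_i=\lambda$, the quantity $\vectornorm{\signal-\observation}_2^2 = \sum_{i=1}^{\sparsity}(\entry_i-w_i)^2 + \sum_{i>\sparsity}w_i^2$ is minimized exactly by the Euclidean projection of $\restr{\observation}{\{1,\dots,\sparsity\}}$ onto the $\sparsity$-dimensional simplex, which is unique and equals $\proj_{\lambda^+}(\restr{\observation}{\{1,\dots,\sparsity\}})$ (Definition~\ref{def: simplex proj}) — precisely line~2 of GSSP. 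Thus GSSP's output attains $\min_{\signal\in\constraint\cap\Delta_\lambda^+}\vectornorm{\signal-\observation}_2$, i.e.\ it solves Problem~\ref{problem:positive}; the lexicographic convention of Remark~\ref{remark:|S|=s} pins down the answer when several top entries of $\observation$ are tied in value.

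The main obstacle is the exchange step itself: one must verify that relocating mass to a higher-ranked (hence at-least-as-large) unused coordinate never increases the distance, and must organize the swaps so that the procedure provably terminates (the index-sum potential), while correctly handling value ties through the stated tie-breaking rule and the case in which the minimizer's support is strictly smaller than $\sparsity$. The remaining ingredients — the reduction to choosing a support and the uniqueness of the simplex projection on a fixed support — are immediate from the preliminaries.
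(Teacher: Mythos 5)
Your proposal is correct and follows essentially the same route as the paper's own proof: an exchange argument showing that moving mass from a support coordinate to a larger unused coordinate cannot increase the squared distance (the identical computation $2u(w_i-w_j)$), so the top-$\sparsity$ coordinates form an optimal support, after which the restricted simplex projection of line~2 is optimal on that support. Your version is marginally more careful than the paper's (an explicit termination potential for the swaps and explicit handling of ties and of supports smaller than $\sparsity$), but the underlying idea is the same.
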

\vspace{-5mm}
\begin{proof}
Intuitively, the $k$-largest coordinates should be in the solution. To see this, suppose that ${\bf u}$ is the projection of $\observation$. Let $w_i$ be one of the $k$-(most positive) coordinates of $\observation$ and $u_i = 0$. Also, let $w_j < w_i,~i \neq j$ such that $u_j > 0$. We can then construct a new vector ${\bf u}'$ where $u'_j = u_i = 0$ and $u'_i = u_j$. Therefore, ${\bf u}' $ satisfies the constraints, and it is closer to $\observation$, i.e., $\|\observation-{\bf u}\|^2_2 - \|\observation-{\bf u}'\|^2_2 = 2u_j(w_i - w_j) > 0$. Hence, ${\bf u}$ cannot be the projection.  

\newcommand{\ssg}{{\hat{\sss}}}
\newcommand{\bfu}{{\bf u}}
\newcommand{\bfuh}{{\bf \hat{u}}}

To be complete in the proof, we also need to show that the cardinality $k$ solutions are as good as any other solution with cardinality less than $k$.
Suppose there exists a solution $\bfu$ with support $|\sss| < k$. Now add \emph{any} elements to $\sss$ to form $\ssg$ with size $k$. Then consider $\observation$ restricted to $\ssg$, and let $\bfuh$ be its projection onto the simplex. 
Because this is a projection, 
$ \| \restr{\bfuh}{\ssg}  - \restr{\observation}{\ssg} \| \le 
\| \restr{\bfu}{\ssg}   - \restr{\observation}{\ssg} \| $, 
hence $\| \bfuh - \observation\| \le \|\bfu-\observation\|$.
\end{proof}

\subsection{Correctness of GSHP} \label{sec: hyperplane}
\begin{theorem}\label{thm: GSHP}
    Algorithm~\ref{alg:2} exactly solves Problem~\ref{problem:general}.
\end{theorem}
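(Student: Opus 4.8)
The plan is to pass to the underlying discrete problem, observe that GSHP is a greedy algorithm, and show that both the optimizer and every greedy iterate live in a one–parameter family (prefix $\cup$ suffix of the sorted $w$) on which greedy provably climbs to the top.

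\textbf{Step 1: reduction and a marginal‑gain identity.} By Remark~\ref{remark:equiv} and the analysis in Section~3.2, it suffices to find $\sss^\Ast\in\argmax_{\sss\in\constraint}F(\sss)$ with $F(\sss)=\sum_{i\in\sss}w_i^2-\tfrac1{|\sss|}(\sum_{i\in\sss}w_i-\lambda)^2$, and then $\bestsignal$ is the $\proj_\lambda$‑projection of the restricted vector. The key computation I would record first is
\[ F(\sss\cup\{j\})-F(\sss)=\tfrac{|\sss|}{|\sss|+1}\,(w_j-\tau_\sss)^2,\qquad \tau_\sss:=\tfrac1{|\sss|}\Big(\textstyle\sum_{i\in\sss}w_i-\lambda\Big), \]
obtained by the same bookkeeping as in Lemma~\ref{biggerThanTau}. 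Two consequences follow: (i) adjoining any element never decreases $F$, so exactly as in the second half of the proof of Theorem~\ref{thm:simplexStandard} we may take $|\sss^\Ast|=k$, in line with Remark~\ref{remark:|S|=s}; and (ii) the growth step of GSHP, which appends $j\in\argmax_{i\notin\sss}(w_i-\tau_\sss)^2$, is precisely the rule ``append the element of largest marginal gain'', with $F(\sss_{\ell+1})=F(\sss_\ell)+\tfrac{\ell}{\ell+1}\max_{j\notin\sss_\ell}(w_j-\tau_{\sss_\ell})^2$; likewise its initialization $\argmax_i\lambda w_i$ equals $\argmax_i F(\{i\})$.

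\textbf{Step 2: the optimizer is a prefix plus a suffix.} Sort $w_1\ge\cdots\ge w_p$. I would show $\sss^\Ast$ has the form $\{1,\dots,a\}\cup\{p-b+1,\dots,p\}$. Writing $x_m:=w_m-\tau_{\sss^\Ast}$, a one‑line computation gives, for $i\in\sss^\Ast,\ j\notin\sss^\Ast$,
\[ F\big(\sss^\Ast\setminus\{i\}\cup\{j\}\big)-F(\sss^\Ast)=\tfrac1k\,(x_j-x_i)\big[(k+1)x_i+(k-1)x_j\big]\le 0 . \]
When $x_i,x_j$ have a common sign the bracket carries that sign, so the inequality forces $x_j\le x_i$ when both are positive and $x_j\ge x_i$ when both are negative. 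Hence among the indices with $w_m>\tau_{\sss^\Ast}$ the members of $\sss^\Ast$ form an initial segment, and among those with $w_m<\tau_{\sss^\Ast}$ they form a final segment; since $w$ is sorted this says $\sss^\Ast$ is a prefix plus a suffix (indices with $w_m=\tau_{\sss^\Ast}$, if any, are pinned down by the lexicographic rule of Remark~\ref{remark:|S|=s}; in fact the same formula shows swapping such an index out strictly increases $F$ unless everything outside $\sss^\Ast$ is also on the threshold).

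\textbf{Step 3: GSHP stays inside this family.} Because $w$ is sorted, the element of a contiguous block $\{a+1,\dots,p-b\}$ farthest from any value $\tau$ is one of the two endpoints $w_{a+1},w_{p-b}$. Starting from $\{1\}$ or $\{p\}$ and repeatedly adjoining such an endpoint, an easy induction shows every iterate $\sss_\ell$ of GSHP is a prefix $\cup$ suffix with complement the block $\{a_\ell+1,\dots,p-b_\ell\}$. I would also note the monotonicity that, among size‑$m$ prefix‑plus‑suffix sets, $\tau$ is nondecreasing in the prefix length $a$ (advancing the boundary swaps out $w_{p-(m-a)+1}$ for the not‑smaller $w_{a+1}$). \textbf{Step 4: greedy reaches the optimum.} Now both $\sss^\Ast$ and all greedy iterates lie among the $k{+}1$ prefix‑plus‑suffix sets, and I would prove by induction on $\ell$ that $F(\sss_\ell)=\max_{|\sss|=\ell}F(\sss)$ (base case: Step~1). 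For the step, let $(a^+,b^+)$ be the optimal size‑$(\ell{+}1)$ prefix‑plus‑suffix; necessarily it is either a one‑step extension of $\sss_\ell=(a_\ell,b_\ell)$ — in which case greedy takes the better of the two one‑step extensions and we are done — or, say, $a^+\ge a_\ell+2$ (the suffix case is symmetric). Delete the innermost prefix index $a^+$ to get the size‑$\ell$ set $(a^+\!-\!1,b^+)$, so that $F(a^+,b^+)=F(a^+\!-\!1,b^+)+\tfrac{\ell}{\ell+1}(w_{a^+}-\tau_{(a^+-1,b^+)})^2$ and, by the inductive hypothesis, $F(a^+\!-\!1,b^+)\le F(\sss_\ell)$; it then remains to bound $(w_{a^+}-\tau_{(a^+-1,b^+)})^2$ by $\max\big((w_{a_\ell+1}-\tau_{\sss_\ell})^2,(w_{p-b_\ell}-\tau_{\sss_\ell})^2\big)$ using $w_{a^+}\le w_{a_\ell+1}$ and $\tau_{(a^+-1,b^+)}\ge\tau_{\sss_\ell}$ from Step~3 (when $w_{a^+}\ge\tau_{(a^+-1,b^+)}$ one gets $0\le w_{a^+}-\tau_{(a^+-1,b^+)}\le w_{a_\ell+1}-\tau_{\sss_\ell}$ directly, and the opposite case is symmetric on the suffix side). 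Taking $\ell=k$ proves the theorem, with ties resolved lexicographically.

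\textbf{The main obstacle} is exactly Step~4: showing greedy never ``overshoots'' — that whenever the optimizer's prefix (or suffix) runs ahead of greedy's, the extra marginal gain it could have collected is still dominated by the gain GSHP actually took. This is where the sorted order, the monotonicity of $\tau_\sss$ in the prefix length, and the precise marginal‑gain identity of Step~1 all have to be combined, and it is the part most sensitive to ties; everything else is routine expansion of $F$.
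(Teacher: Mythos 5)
Your Steps 1--3 are sound: the marginal-gain identity is exactly the paper's relation \eqref{eq:replacement} in a different guise, and the prefix-plus-suffix characterization of both the optimizer and the greedy iterates is a correct structural observation that the paper never makes explicit. Your overall route is also genuinely different from the paper's: the paper compares the algorithm's final output $I$ directly to an arbitrary optimal set via an exchange argument at the first index $e\in I\setminus\mathcal{S}$, with a three-case analysis of the relative positions of $w_e$, $w_{e'}$ and the adjusted average, followed by a reduction of general $\lambda$ to $\lambda=0$; you instead aim for the stronger statement that every greedy iterate is optimal for its own cardinality, confined to a one-parameter family of candidate supports.

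However, Step 4 contains a genuine gap exactly where you flag the main obstacle. In the sub-case $w_{a^+}<\tau_{(a^+-1,b^+)}$ you appeal to ``symmetry on the suffix side,'' but the inequality you would then need, $\tau_{(a^+-1,b^+)}-w_{a^+}\le \tau_{\sss_\ell}-w_{p-b_\ell}$, does not follow from your ingredients: the monotonicity $\tau_{(a^+-1,b^+)}\ge\tau_{\sss_\ell}$ established in Step 3 points the wrong way here (it makes the left-hand side larger, not smaller). The situation is not symmetric, because you deleted the innermost \emph{prefix} element of $(a^+,b^+)$ yet must compare its re-insertion gain against the \emph{suffix} gain available to greedy. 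What actually rescues the argument is that this sub-case is essentially vacuous at an optimum: if $w_{a^+}<\tau_{(a^+-1,b^+)}$, then every element of the complement block of $(a^+-1,b^+)$ lies below that threshold, so $(w_{p-b^+}-\tau_{(a^+-1,b^+)})^2\ge(w_{a^+}-\tau_{(a^+-1,b^+)})^2$ and hence $F(a^+-1,b^++1)\ge F(a^+,b^+)$, contradicting (or tying) the optimality of $(a^+,b^+)$. You therefore need an extra selection argument --- among all optimal size-$(\ell+1)$ prefix-plus-suffix sets, choose one whose innermost ``extra'' element lies on the correct side of its threshold, with ties handled explicitly --- before your sub-case (i) computation closes the induction. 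As written, the induction step is incomplete.
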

\vspace{-4mm}

\def\S{{\cal S}}
\def\Smep{\S \backslash \{ e' \}}
\def\avg{\mathrm{avg}}
\vspace{-1mm}
\begin{proof}
To motivate the support selection of GSHP, we now identify a key relation that holds for any ${\bf b}\in \mathbb{R}^\sparsity$:
\begin{equation}
 \sum_{i=1}^\sparsity b_i^2 - \frac{\left( \sum_{i=1}^\sparsity b_i -\lambda\right)^2} {\sparsity} = \qquad \qquad\nonumber
\end{equation}
\begin{equation}\label{eq:differences} 
\lambda(2b_1-\lambda) +
\sum_{j=2}^\sparsity \frac{j-1}{j}\Big(b_j - \frac{\sum_{i=1}^{j-1} b_i-\lambda}{j-1} \Big)^2.
\end{equation}
By its left-hand side, this relation is invariant under permutation of ${\bf b}$. Moreover, the summands in the sum over $k$ are certainly non-negative for $\sparsity\ge 2$, so without loss of generality the solution sparsity of the original problem is  $||\bestsignal||_0 = \sparsity$.
For $\sparsity=1$, $F$ is maximized by picking an index $i$ that maximizes $\lambda w_i$, which is what the algorithm does.

For the sake of clarity (and space), we first describe the proof of the case $k \ge 2$
for $\lambda=0$ and then explain how it generalizes for $\lambda\ne 0$. In the sequel, let us use the shortcut $\avg(S) = \frac{1}{|S|} \sum_{j \in S} w_j$. 

Let $\S$ be an optimal solution index set and
let $I$ be the result computed by the algorithm.
For a proof (of the case $\sparsity \ge 2, \lambda = 0$) by contradiction, assume that $I$ and $\S$ differ.
Let $e$ be the first element of $I \backslash \S$
in the order of insertion into $I$ by the algorithm.
Let $e'$ be the element of
$\S \backslash I_0$ that lies closest to $e$.
Without loss of generality, we may assume that $w_e \neq w_{e'}$, otherwise we could have chosen $\Smep \cup \{e\}$
rather than $\S$ as solution in the first place.
Let $I_0 \subseteq I \cap \S$ be the indices added to $I$ by the algorithm
before $e$.
Assume that $I_0$ is nonempty. We will later see how to ensure this.

Let $a := \avg(I_0)$ and $a' := \avg(\Smep)$.
There are three ways in which $w_e$, $w_{e'}$ and $a'$ can be ordered
relative to each other:
\begin{enumerate}[1.]
\item
$e'$ lies between $e$ and $a'$, thus
$|w_{e'} - a'| < |w_e - a'|$
since $w_e \neq w_{e'}$.

\item
$a'$ lies between $e$ and $e'$.
But then, since there are no elements of $\S$ between $e$ and $e'$,
$\S \backslash I_0$ moves the average $a'$ beyond $a$ away from $e$ towards $e'$, so
$|w_{e'} - a'| < |w_{e'} - a|$ and
$|w_e - a| < |w_e - a'|$.
But we know that
$|w_{e'} - a| < |w_e - a|$ since
$e = \mbox{argmax}_{i \in I_0} |w_i - a|$
by the choice of the greedy algorithm and $w_e \neq w_{e'}$. Thus
$|w_{e'} - a'| < |w_e - a'|$.

\item
$|w_e - a'| < |w_{e'} - a'|$, i.e., 
$e$ lies between $a'$ and $e'$.
But this case is impossible: compared to $a$, $a'$ averages over additional values 
that are closer to $a$ than $e$, and $e'$ is one of them.
So $a'$ must be on the same side as $e'$ relative to $e$, not the opposite side.
\end{enumerate}

So
$|w_{e'} - a'| < |w_e - a'|$ is assured in all cases. Note in particular that if
$|S| \ge 1$, $|w_i - \avg(S)| ~\theta ~|w_j - \avg(S)|$,
then
\begin{align}
F(S \cup \{ i \}) &=
F(S) + \frac{\sparsity-1}{\sparsity} \Big( w_i - \avg(S) \Big)^2 \nonumber
\\
&~~\theta~~
F(S) + \frac{\sparsity-1}{\sparsity} \Big( w_j - \avg(S) \Big)^2 \nonumber
\\
&= F(S \cup \{j\}),
\label{eq:replacement}
\end{align}
where $\theta$ is either `$=$' or `$<$'. By inequality~(\ref{eq:replacement}),
$F(\S) < F((\Smep) \cup \{e\})$.
But this means that $\S$ is not a solution: contradiction.

We have assumed that $I_0$ is nonempty; this is ensured because
any solution $\S$ must contain at least an index $i \in \argmax_j w_j$.
Otherwise, we could replace a maximal index w.r.t.\ $w$ in $\S$ by this $i$
and get, by \eqref{eq:replacement}, a larger $F$ value.
This would be a contradiction with our assumption that $\S$ is a solution.
Note that this maximal index is also picked (first) by the algorithm.
This completes the proof for the case $\lambda = 0$.
Let us now consider the general case where $\lambda$ is unrestricted.

We reduce the general problem to the case that $\lambda=0$.
Let us write $F_{w, \lambda}$ to make the parameters $w$ and $\lambda$ explicit when talking of $F$.
Let $w'_{i^*} := w_{i^*} - \lambda$ for one $i^*$ for which
$\lambda w_{i^*}$ is maximal,
and let $w'_i := w_i$ for all other $i$.
We use the fact that, by the definition of $F$,
\[
F_{w, \lambda}(\S) = 2\lambda w'_{i^*} + \lambda^2 + F_{w', 0}(\S)
\]
when $\S$ contains such an element $i^* \in \mbox{argmax}_j (\lambda w_j)$.
Clearly, $i^*$ is an extremal element w.r.t.\ $w$ and $w_{i^*}$ has maximum distance from $-\lambda$, so
\[
i^* \in \argmax_j \left| w_{j} - \frac{\sum_{i \neq j} w_i - \lambda}{j-1} \right|.
\]
By (\ref{eq:differences}), $i^*$
must be in the optimal solution for $F_{w, \lambda}$.
Also, $F_{w', 0}(\S)$ and $2\lambda w'_{i^*} + \lambda^2 + F_{w', 0}(\S)$ are maximized by the same index sets $\S$
when $i^* \in \S$ is required. 
Thus,
\[
\argmax_\S F_{w, \lambda}(\S) = \argmax_{\S: j \in \S} F_{w', 0}(\S).
\]
Now observe that our previous proof for the case $\lambda = 0$ also works if one adds a constraint that one or more
indices be part of the solution: If the algorithm computes these elements as part of its result $I$,
they are in $I_0 = I \cap \S$.
But this is what the algorithm does on input $(w, \lambda)$; it chooses $i^*$ in its first step and then proceeds
as if maximizing $F_{w', 0}$.
Thus we have established the algorithm's correctness.
\end{proof}

\section{Application: Quantum tomography}
\paragraph{Problem:} In quantum tomography (QT), we aim to learn a \emph{density matrix} $\X^{\star} \in \mathbb{C}^{d\times d}$, which is Hermitian (i.e., $(\X^\star)^H = \X^{\star}$), positive semi-definite (i.e., $\X^{\star} \succeq 0$) and has $\text{rank}(\X^{\star}) = r$ and $\tr(\X^{\star}) = 1$.  The QT measurements are $\obs = \linmap(\X^{\star}) +\boldsymbol{\eta}$, where $(\linmap(\X^{\star}))_i = \tr({\bf E}_i \X^{\star}) + \eta_i$, and $\eta_i$ is  zero-mean Gaussian. The operators ${\bf E}_i$'s are the tensor product of the $2\times 2$ Pauli matrices \cite{liu2011universal}. 

Recently, \cite{liu2011universal} showed that 
almost all such tensor constructions of $\bigO(rd \log^6 d)$ Pauli measurements satisfy the so-called rank-$r$ restricted isometry property (RIP) for all $\X\in \left\{\X\in\C^{d\times d} :  \X \succeq 0, \text{rank}(\X) \leq r, \|\X\|_*\le \sqrt{r} \|\X\|_F\right\}$: 
\begin{equation}\label{eq: RIP-Pauli}
\left(1-\delta_r\right)\|\X\|_F^2 \leq \|\linmap(\X)\|_F^2 \leq \left(1+\delta_r\right)\|\X\|_F^2,
\end{equation}
where $\|\cdot\|_*$ is the nuclear norm (i.e., the sum of singular values), which reduces to $\tr(\X)$ since $\X \succeq 0$. This key observation enables us to leverage the recent theoretical and algorithmic advances in low-rank matrix recovery from a few affine measurements. 

The standard matrix-completion based approach to recover $\X^{\star}$ from $\obs$ is the following convex relaxation:
\begin{equation} \label{eq:conventional}
    \minimize_{ \X \succeq 0} \|\linmap(\X) - \obs\|_F^2 + \lambda \|\X\|_*.
\end{equation}
This convex approach is both tractable and amenable to theoretical analysis~\cite{QuantumTomoPRL,liu2011universal}. 
As a result, we can provably reduce the number of samples $\numsam$  from $\bigO(d^2)$ 
to $ \tilde{\mathcal O}(rd)$~\cite{liu2011universal}. 

Unfortunately, this convex approach fails to account for the physical constraint $\|\X\|_*=1$. To overcome this difficulty, the relaxation parameter $\lambda$ is tuned to obtain solutions with the desired rank followed by normalization to heuristically meet the trace constraint. 

In this section, we demonstrate that one can do significantly better via the non-convex algorithm based on \eqref{eq: projected gradient}. A key ingredient then is the following projection:  
\begin{equation}\label{eq:rankProj}
\widehat{\bf B} \in \argmin_{{\bf B} \succeq 0} \vectornorm{{\bf B} - {\bf W}}_F^2~\text{s.t.}~\text{rank}({\bf B}) = r,~\text{tr}({\bf B}) = 1, 
\end{equation}
for a given \emph{Hermitian} matrix ${\bf W} \in \mathbb{R}^{n \times n}$. Since the RIP assumption holds here, we can obtain rigorous guarantees based on a similar analysis to \cite{garg2009gradient,foucart2010sparse, SVP}.

To obtain the solution, we compute the eigenvalue decomposition ${\bf W}={\bf U}{\Lambda}_{\bf W}{\bf U}^H$ and then use the unitary invariance of the problem to  solve $ {\bf D}^\star \in \argmin_{\bf D} \|{\bf D}-{\bf \Lambda}_{\bf W}\|_F$  subject to $\|{\bf D}\|_* \le 1$ and $\text{rank}({\bf D}) \le r$, and from ${\bf D}^\star$ form ${\bf U}{\bf D}^\star {\bf U}^H$ to obtain a solution.
In fact, we can constrain ${\bf D}$ to be diagonal, and thus reduce the matrix problem  to the vector version for ${\bf D} = \text{diag}({\bf d})$, where the projector in Problem \ref{problem:positive} applies. This reduction follows from the  well-known result:

\begin{proposition}[\cite{mirsky1960symmetric}] \label{prop:1}
Let ${\bf A},{\bf B} \in \Rmn$ and $q=\min\{m,n\}$. Let $\sigma_i({\bf A})$ be the singular values of ${\bf A}$ in descending order (similarly for ${\bf B}$). Then,
$$ \sum_{i=1}^q \bigl(\sigma_i({\bf A}) - \sigma_i({\bf B})\bigr)^2  \le \|{\bf A}-{\bf B}\|_F^2. $$
\end{proposition}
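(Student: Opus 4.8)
The plan is to recognize Proposition~\ref{prop:1} as the assertion that the map carrying a matrix to its vector of descending singular values is nonexpansive in the Frobenius norm, and to prove it by reducing to von Neumann's trace inequality. First I would expand the Frobenius distance using the trace inner product $\langle \mathbf A,\mathbf B\rangle := \tr(\mathbf A^{T}\mathbf B)$:
\begin{equation*}
\|\mathbf A-\mathbf B\|_F^2 \;=\; \|\mathbf A\|_F^2 + \|\mathbf B\|_F^2 - 2\,\tr(\mathbf A^{T}\mathbf B),
\end{equation*}
and then use $\|\mathbf A\|_F^2 = \tr(\mathbf A^{T}\mathbf A) = \sum_{i=1}^q \sigma_i(\mathbf A)^2$, and likewise $\|\mathbf B\|_F^2 = \sum_{i=1}^q \sigma_i(\mathbf B)^2$. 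So the only quantity left to control is the cross term $\tr(\mathbf A^{T}\mathbf B)$.

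Next I would invoke von Neumann's trace inequality, $\tr(\mathbf A^{T}\mathbf B) \le \sum_{i=1}^q \sigma_i(\mathbf A)\,\sigma_i(\mathbf B)$, valid exactly when the singular values are listed in descending order. Substituting into the expansion and using that $-2\,\tr(\mathbf A^{T}\mathbf B) \ge -2\sum_i \sigma_i(\mathbf A)\sigma_i(\mathbf B)$ gives
\begin{equation*}
\|\mathbf A-\mathbf B\|_F^2 \;\ge\; \sum_{i=1}^q \sigma_i(\mathbf A)^2 + \sum_{i=1}^q \sigma_i(\mathbf B)^2 - 2\sum_{i=1}^q \sigma_i(\mathbf A)\,\sigma_i(\mathbf B) \;=\; \sum_{i=1}^q \bigl(\sigma_i(\mathbf A)-\sigma_i(\mathbf B)\bigr)^2,
\end{equation*}
which is precisely the claimed bound.

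The substance of the argument is therefore von Neumann's inequality, and that is the one step I expect to be the real obstacle (the expansion above is bookkeeping). If a self-contained derivation is wanted, I would pad $\mathbf A,\mathbf B$ to square size, write their SVDs $\mathbf A=\mathbf U_A\Sigma_A\mathbf V_A^{T}$, $\mathbf B=\mathbf U_B\Sigma_B\mathbf V_B^{T}$, obtain $\tr(\mathbf A^{T}\mathbf B)=\tr(\Sigma_A\mathbf W_1\Sigma_B\mathbf W_2)=\sum_{i,j}\sigma_i(\mathbf A)\sigma_j(\mathbf B)(\mathbf W_1)_{ij}(\mathbf W_2)_{ji}$ with $\mathbf W_1,\mathbf W_2$ orthogonal, show via Cauchy--Schwarz that the coefficient array $\bigl((\mathbf W_1)_{ij}(\mathbf W_2)_{ji}\bigr)_{i,j}$ is doubly substochastic, and conclude by Birkhoff's theorem that the bilinear form is maximized at a permutation, with the sorted (identity) pairing optimal among permutations by the rearrangement inequality. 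Alternatively, one can bypass von Neumann entirely by applying the Hoffman--Wielandt inequality to the symmetric (Jordan--Wielandt) dilations $\widetilde{\mathbf A}=\left(\begin{smallmatrix}0&\mathbf A\\ \mathbf A^{T}&0\end{smallmatrix}\right)$ and $\widetilde{\mathbf B}$: their eigenvalues are $\{\pm\sigma_i(\cdot)\}_{i=1}^q$ together with zeros, and $\|\widetilde{\mathbf A}-\widetilde{\mathbf B}\|_F^2=2\|\mathbf A-\mathbf B\|_F^2$, so matching sorted eigenvalues yields $2\sum_i(\sigma_i(\mathbf A)-\sigma_i(\mathbf B))^2\le 2\|\mathbf A-\mathbf B\|_F^2$. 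Either route leans on a classical majorization/rearrangement fact, which is also why simply citing \cite{mirsky1960symmetric} (or von Neumann) is a legitimate way to close the proof.
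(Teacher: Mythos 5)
Your argument is correct. Note, however, that the paper does not prove this proposition at all: it is stated as a citation to Mirsky's 1960 result on unitarily invariant norms, so there is no in-paper proof to compare against. What you supply is the standard derivation of the Frobenius-norm special case: expand $\|\mathbf A-\mathbf B\|_F^2$ via the trace inner product, identify $\|\mathbf A\|_F^2=\sum_i\sigma_i(\mathbf A)^2$, and bound the cross term with von Neumann's trace inequality; the algebra then collapses to exactly the claimed bound, and the sorted (descending) order of the singular values is used precisely where von Neumann's inequality requires it. Your two suggested ways of discharging the von Neumann step are both sound --- the doubly-substochastic/Birkhoff/rearrangement route is essentially the classical proof (with the minor caveat that for substochastic matrices one uses that the extreme points are subpermutation matrices, or dominates by a doubly stochastic matrix), and the Hoffman--Wielandt argument on the Jordan--Wielandt dilation $\left(\begin{smallmatrix}0&\mathbf A\\ \mathbf A^{T}&0\end{smallmatrix}\right)$ is a clean alternative that correctly accounts for the factor of $2$ and the padding zeros. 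Either version is a legitimate self-contained replacement for the bare citation; there is no gap.
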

Equation~\eqref{eq:rankProj} has a solution if $r\ge 1$ since the constraint set is non-empty and compact (Weierstrass's theorem).
As the vector reduction achieves the lower bound, it is an optimal projection. 

\begin{figure}[t] 
    \centering
    \includegraphics[width=.95\linewidth]{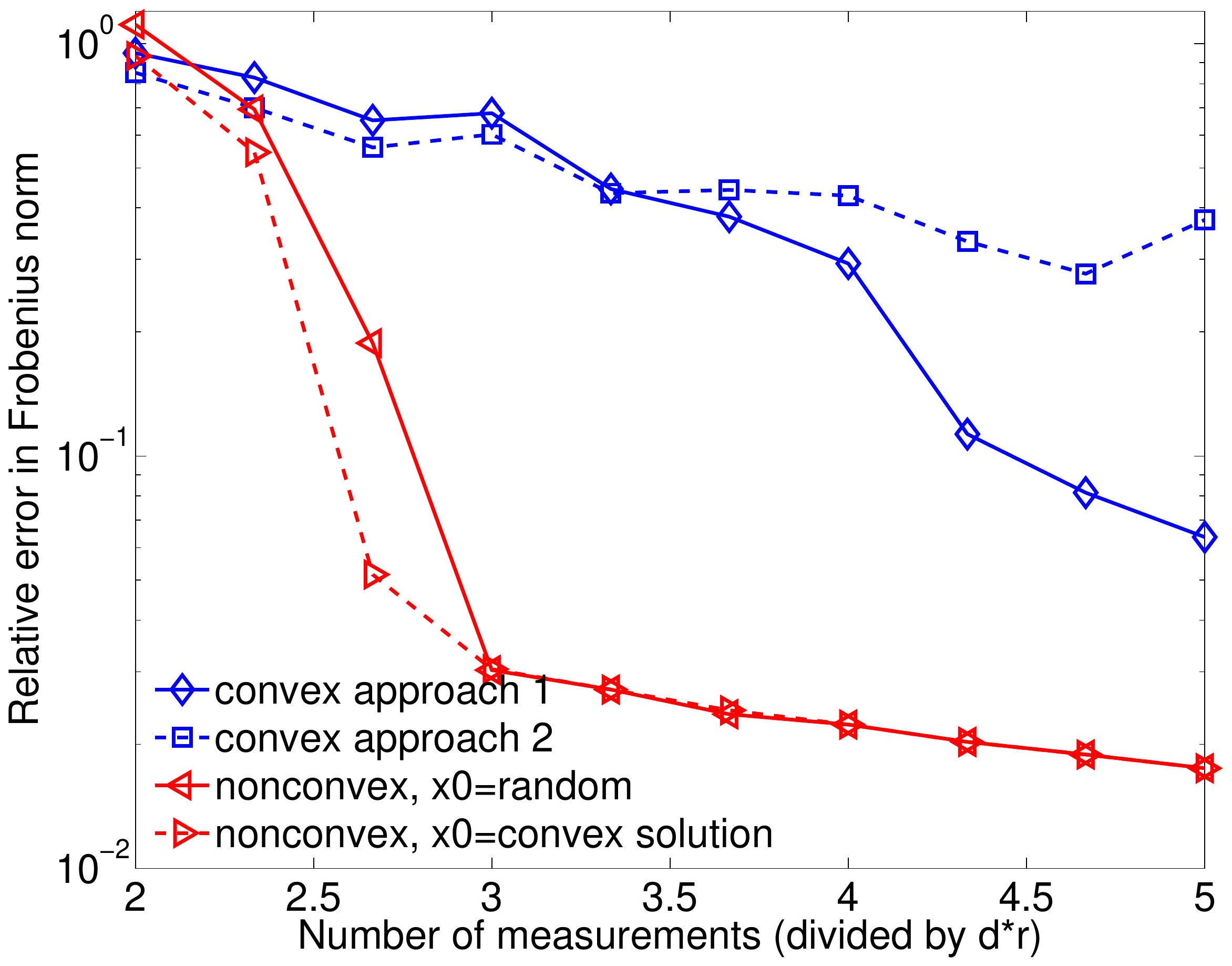}
    \caption{Quantum tomography with $8$ qubits and $30$~dB SNR: Each point is the median over 10 random realizations.   Convex approach $1$ refers to~\eqref{eq:conventional} and approach $2$ is~\eqref{eq:conventional_2}.}
    \label{fig:quantum1}
\end{figure}
\begin{figure}[t] 
    \centering
    \includegraphics[width=.95\linewidth]{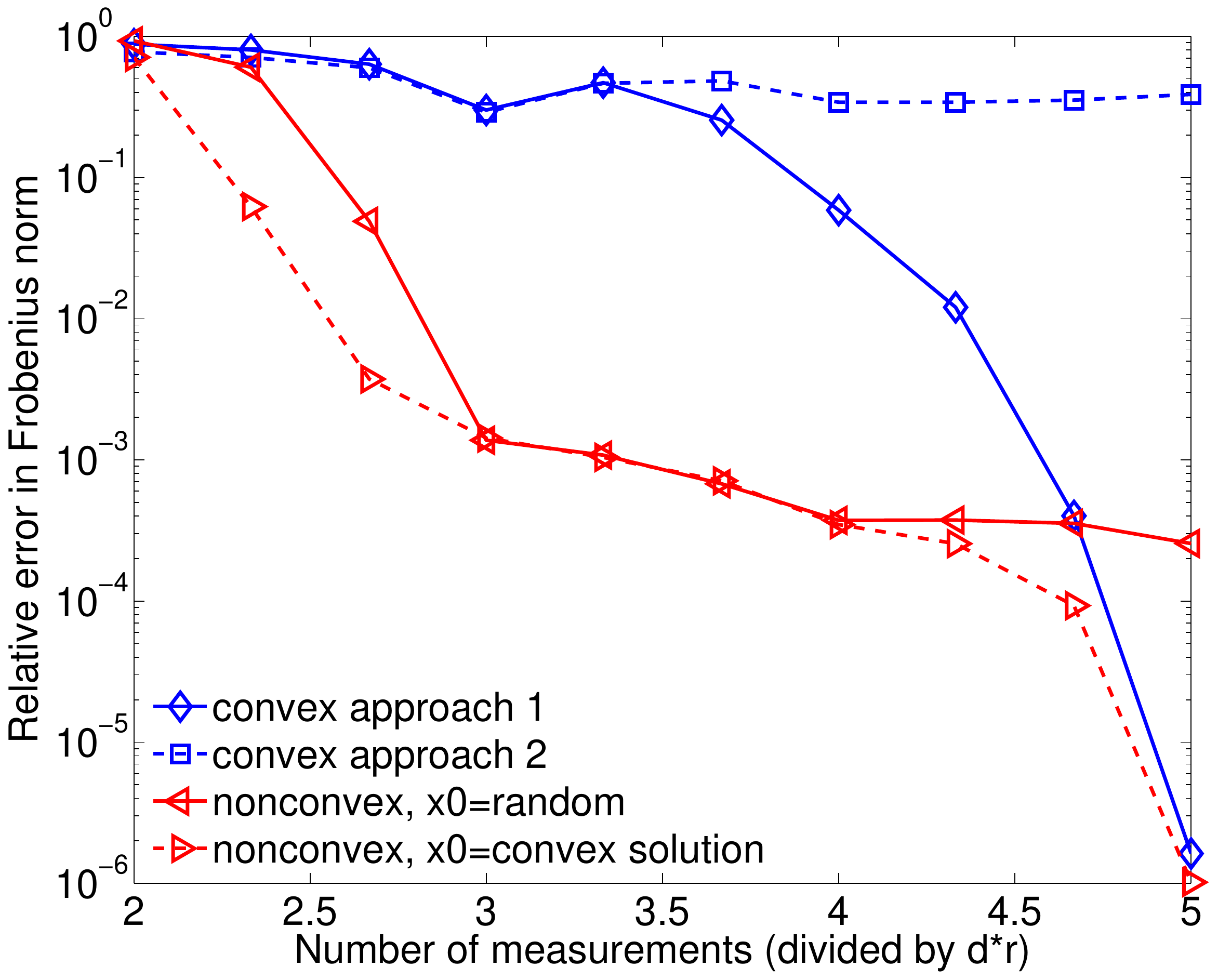}
    \caption{Same as Figure~\ref{fig:quantum1} but with $7$ qubits, no noise. }
    \label{fig:quantum2}
\end{figure}
\paragraph{Numerical experiments:}
We numerically demonstrate that the ability to project onto trace and rank constraints jointly can radically improve recovery even with the simple gradient descent algorithm as in \eqref{eq: projected gradient}. We follow the same approach in \cite{QuantumTomoPRL}: we generate random Pauli measurements and add white noise. The experiments that follow use a $8$ qubit system ($d=2^8$) with noise SNR at $30$~dB (so the absolute noise level changes depending on the number of measurements), and a $7$ qubit noiseless system.

The measurements are generated using a random real-valued matrix $\X^\star$ with rank $2$, although the algorithms also work with complex-valued matrices. A $d\times d$ rank $r$ real-valued matrix has $dr-r(r-1)/2 \approx dr$ degrees of freedom, hence we need at least $2dr$ number of measurements to recover $\X^\star$ from noiseless measurements (due to the null-space of the linear map). To test the various approaches, we vary the number of measurements between $2dr$ and $5dr$. We assume $r$ is known, though other computational experience suggests that estimates of $r$ return good answers as well.

The convex problem~\eqref{eq:conventional} depends on a parameter $\lambda$. We solve the problem for different $\lambda$ in a bracketing search until we find the first $\lambda$ that provides a solution with numerical rank $r$. 
Like~\cite{flammia2012quantum}, 
we normalize the final estimate to ensure the trace is $1$. 
Additionally, we test the following convex approach:
\begin{equation} \label{eq:conventional_2}
    \minimize_{ \X \succeq 0, \|\X\|_* \le 1} \|\linmap(\X) - \obs\|_F^2.
\end{equation}
Compared to~\eqref{eq:conventional}, no parameters are needed since we exploit prior knowledge of the trace, but there is no guarantee on the rank.  Both convex approaches can be solved with proximal gradient descent; we use the TFOCS package~\cite{becker2010templates} since it uses a sophisticated line search and Nesterov acceleration.

To illustrate the power of the combinatorial projections, we solve the following non-convex formulation:
\begin{equation} \label{eq:quantum_nonconvex}
    \minimize_{ \X \succeq 0, \|\X\|_* \le 1, \text{rank}(\X)=r} \|\linmap(\X) - \obs\|_F^2.
\end{equation}
Within the projected gradient algorithm~\eqref{eq: projected gradient}, we use the GSSP algorithm as described above. The stepsize is $\mu^i=3/\|\linmap\|^2$ where $\|\cdot\|$ is the operator norm; we can also apply Nesterov acceleration to speed convergence, but we use~\eqref{eq: projected gradient} for simplicity. Due to the non-convexity, the algorithm depends on the starting value $\X_0$. We try two strategies: $(i)$ random initialization, and $(ii)$ initializing with the solution from~\eqref{eq:conventional_2}. Both initializations often lead to the same stationary point.

Figure~\ref{fig:quantum1} shows the relative error $\|\X-\X^\star\|_F/\|\X^\star\|_F$ of the different approaches. All approaches do poorly when there are only $2dr$ measurements since this is near the noiseless information-theoretic limit. For higher numbers of measurements, the non-convex approach substantially outperforms both convex approaches. For $2.4dr$ measurements, it helps to start $\X_0$ with the convex solution, but otherwise the two non-convex approaches are nearly identical.

Between the two convex solutions, \eqref{eq:conventional} outperforms \eqref{eq:conventional_2} since it tunes $\lambda$ to achieve a rank $r$ solution. Neither convex approach is competitive with the non-convex approaches since they do not take advantage of the prior knowledge on trace and rank.  

Figure~\ref{fig:quantum2} shows more results on a $7$ qubit problem without noise. Again, the non-convex approach gives better results, particularly when there are fewer measurements. As expected, both approaches approach perfect recovery as the number of measurements increases.

\begin{table}
    \centering
    \begin{tabular}{lcc}
        \toprule 
        Approach & mean time & standard deviation \\
        \midrule
        convex & $0.294$~s. & $0.030$~s. \\
        non-convex & $0.192$~s. & $0.019$~s. \\
        \bottomrule
    \end{tabular}
    \caption{Time per iteration of convex and non-convex approaches for quantum state tomography with $8$ qubits.}
    \label{tab:quantum}
\end{table}

Here we highlight another key benefit of the non-convex approach: since the number of eigenvectors needed in the partial eigenvalue decomposition is at most $r$, it is quite scalable. In general, the convex approach has intermediate iterates which require eigenvalue decompositions close to the full dimension, especially during the first few iterations. Table~\ref{tab:quantum} shows average time per iteration for the convex and non-convex approach (overall time is more complicated, since the number of iterations depends on linesearch and other factors). Even using Matlab's dense eigenvalue solver \verb@eig@, the iterations of the non-convex approach are faster; problems that used an iterative Krylov subspace solver would show an even larger discrepancy.\footnote{Quantum state tomography does not easily benefit from iterative eigenvalue solvers, since the range of $\linmap^*$ is not sparse.}

\section{Application: Measure learning}
\begin{figure*}
   \begin{minipage}[t]{0.33\textwidth}
      \vspace{0pt}
      \includegraphics[width=\linewidth]{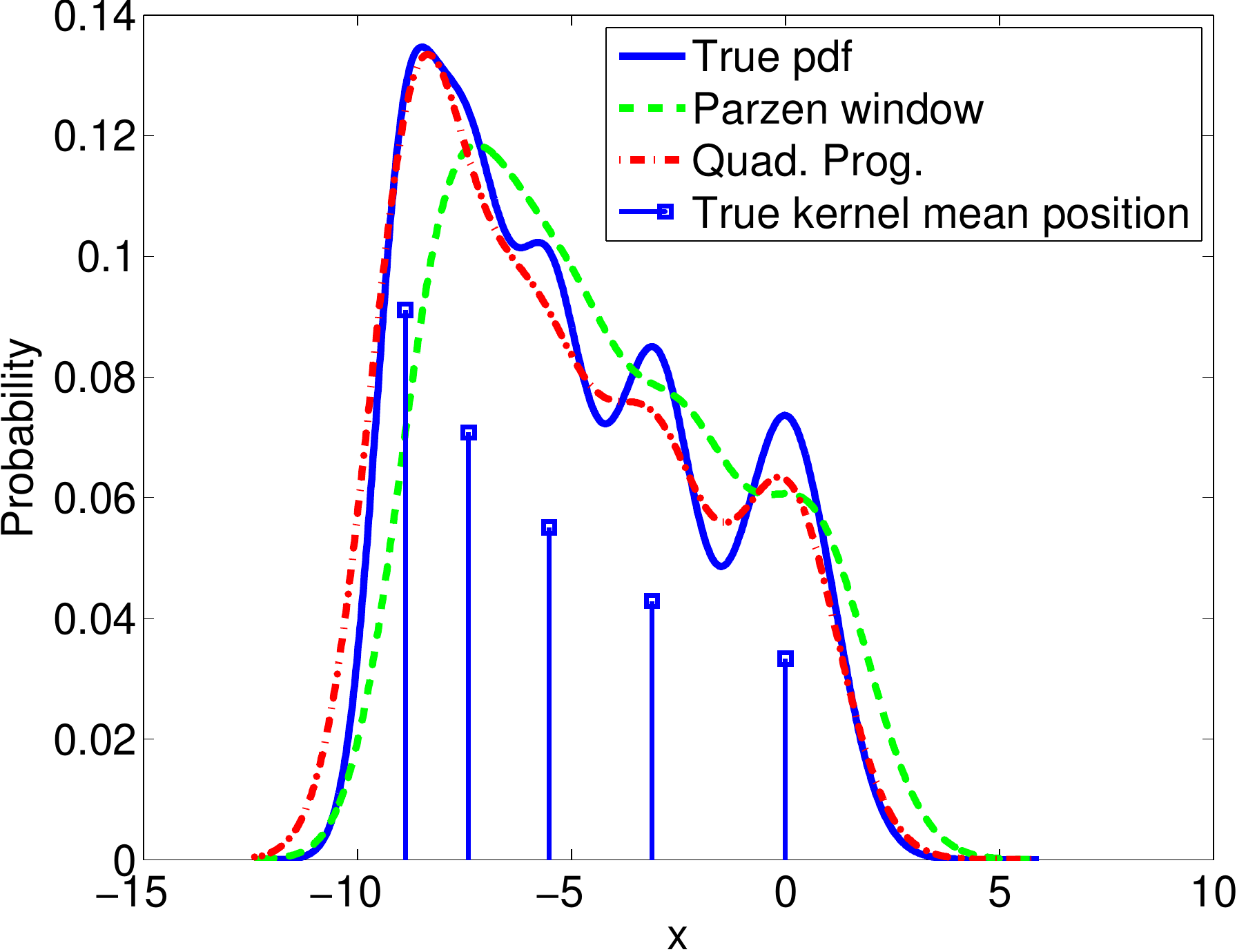}
   \end{minipage}
   \begin{minipage}[t]{0.33\textwidth}
      \vspace{0pt}
      \includegraphics[width=\linewidth]{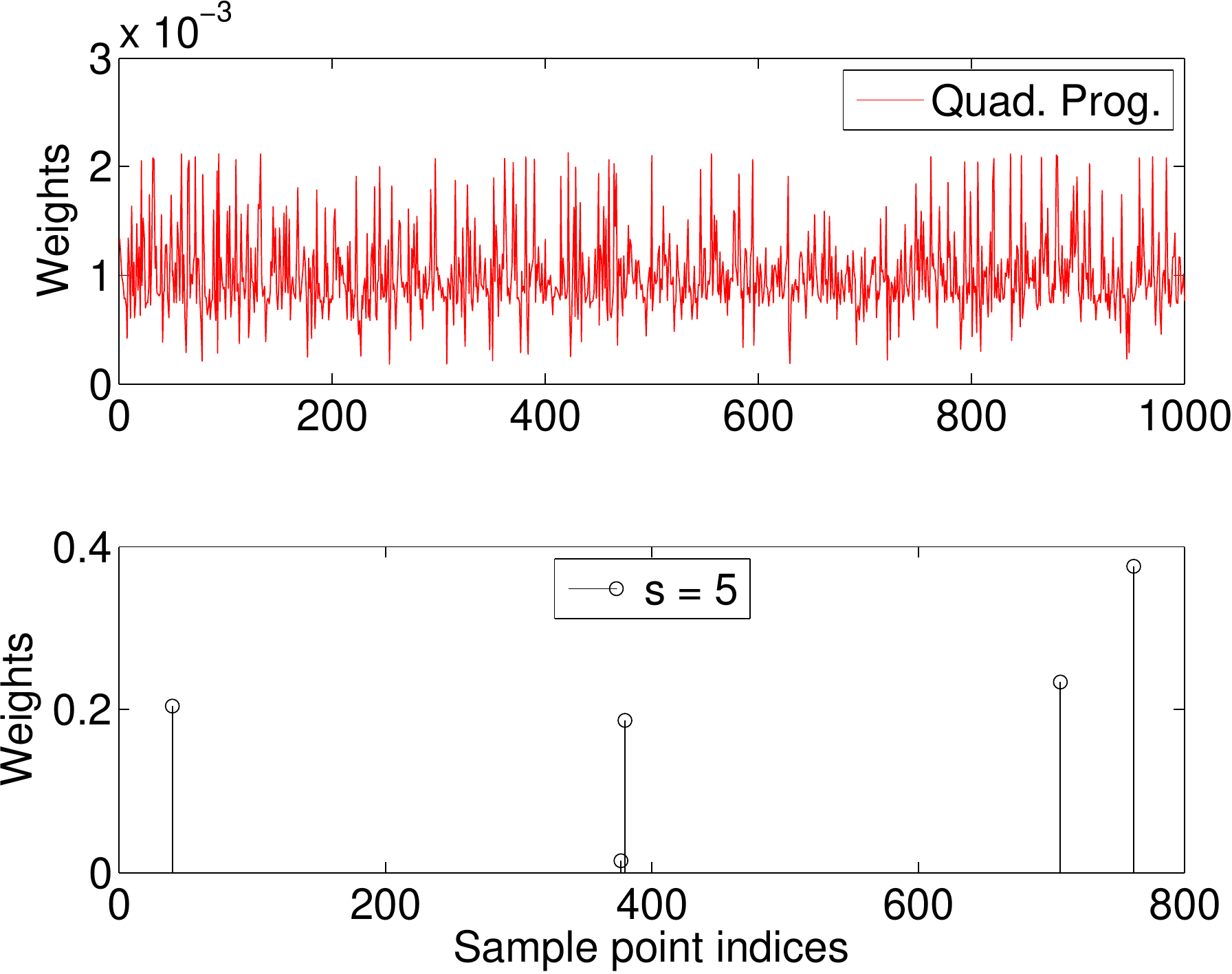}
   \end{minipage}
   \begin{minipage}[t]{0.33\textwidth}
      \vspace{0pt}
      \includegraphics[width=\linewidth]{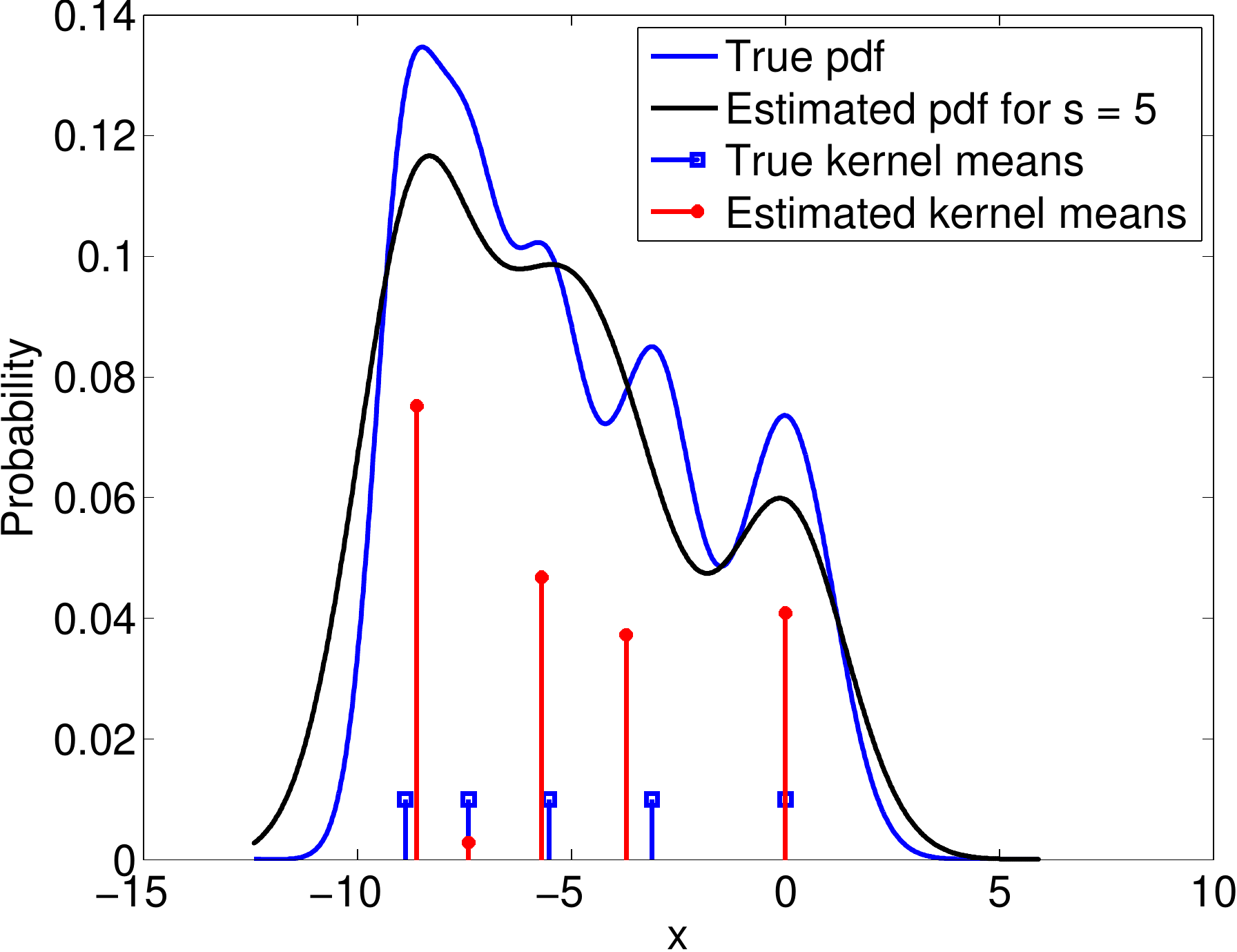}
      \end{minipage}\vspace{-.8em}
   \caption{\small\sl
Density estimation results using the Parzen method (left), the quadratic program \eqref{eq:SDE} (left and middle-top), and our approach (middle-bottom and right).} \label{fig: Exp6}
\end{figure*}

\begin{figure*}
   \begin{minipage}[t]{0.24\textwidth}
      \vspace{0pt}
      \subfigure[$\sparsity = 3$]{\includegraphics[width=\linewidth]{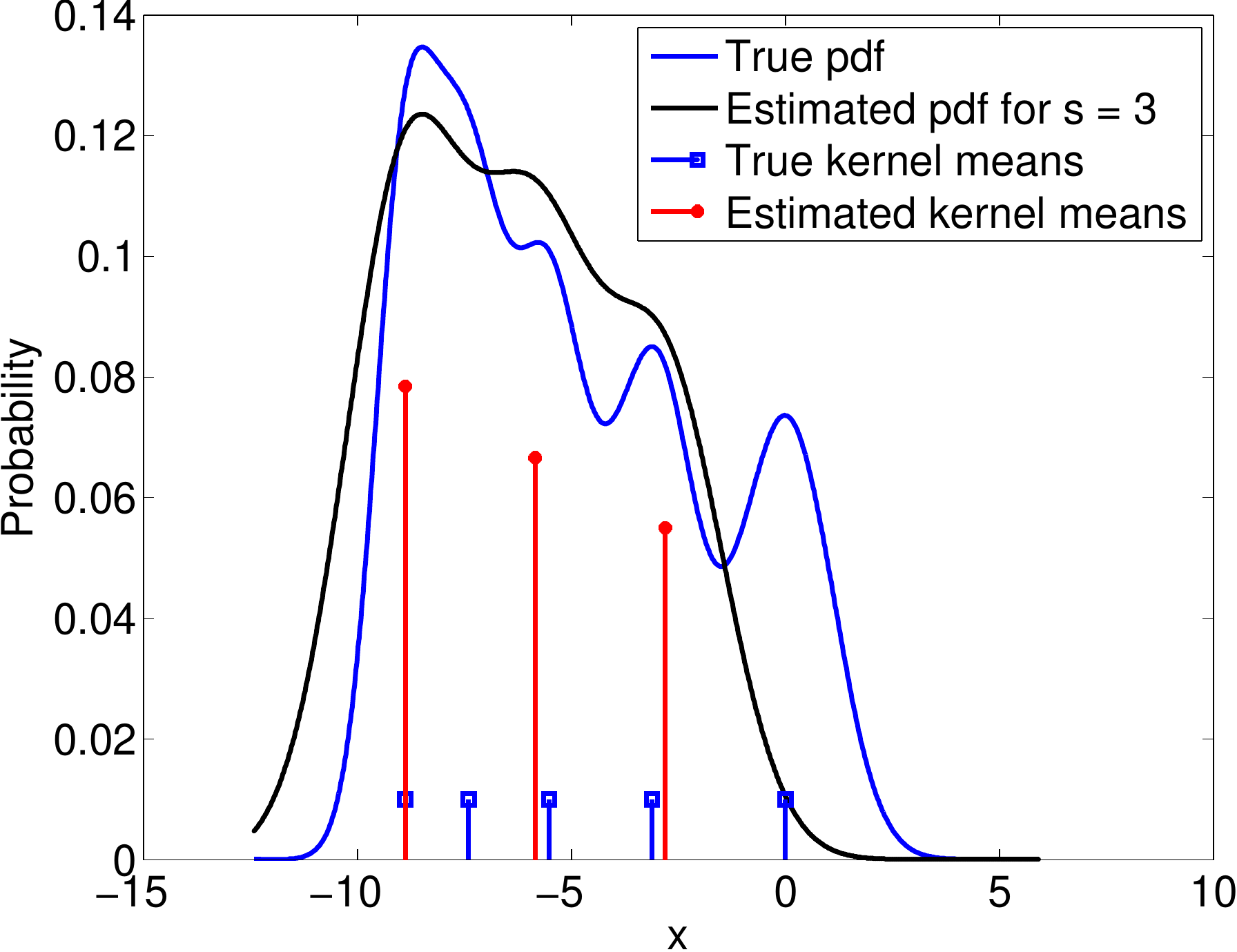}}
   \end{minipage}
   \begin{minipage}[t]{0.24\textwidth}
      \vspace{0pt}
      \subfigure[$\sparsity = 8$]{\includegraphics[width=\linewidth]{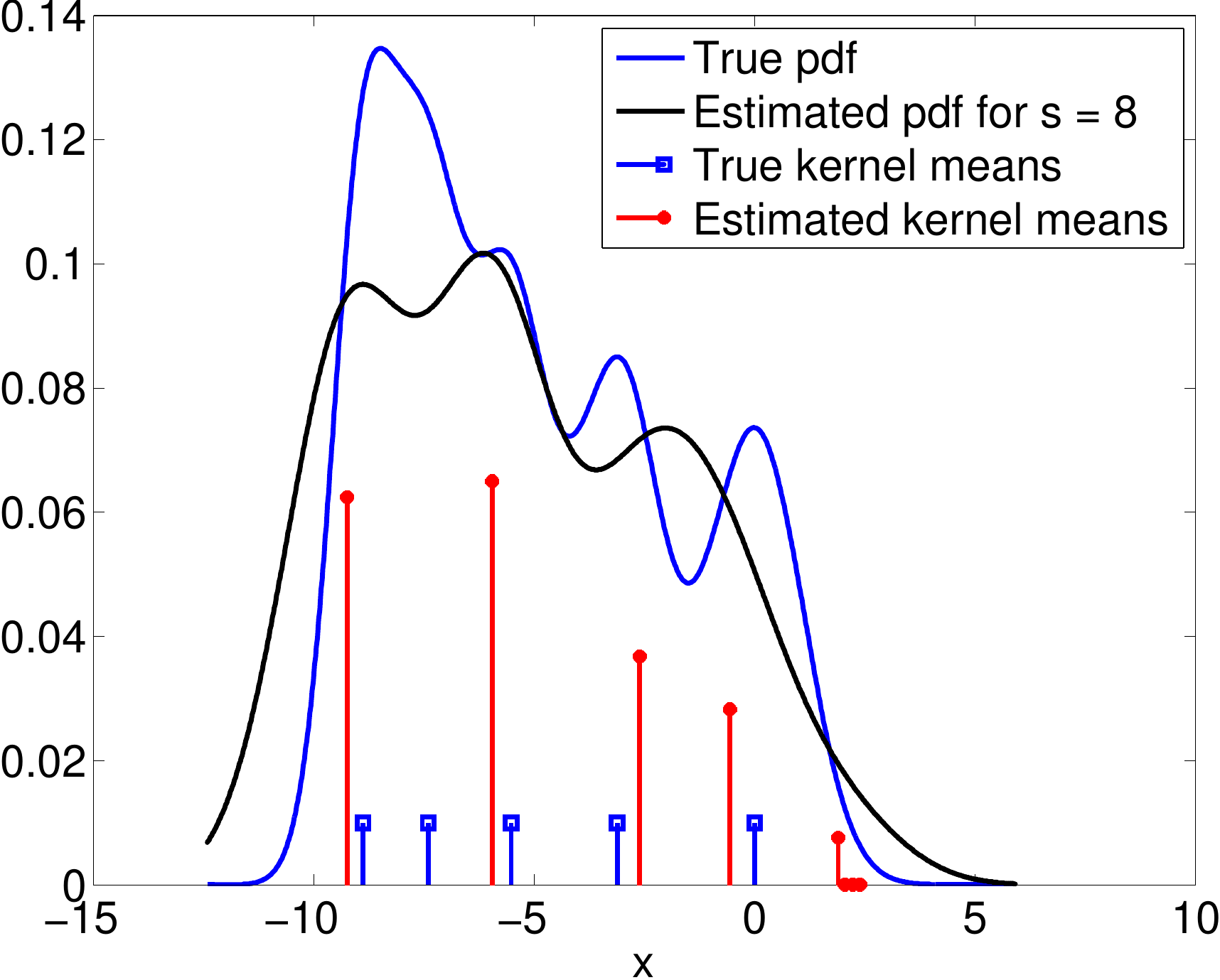}}
   \end{minipage}
   \begin{minipage}[t]{0.24\textwidth}
      \vspace{0pt}
      \subfigure[$\sparsity = 10$]{\includegraphics[width=\linewidth]{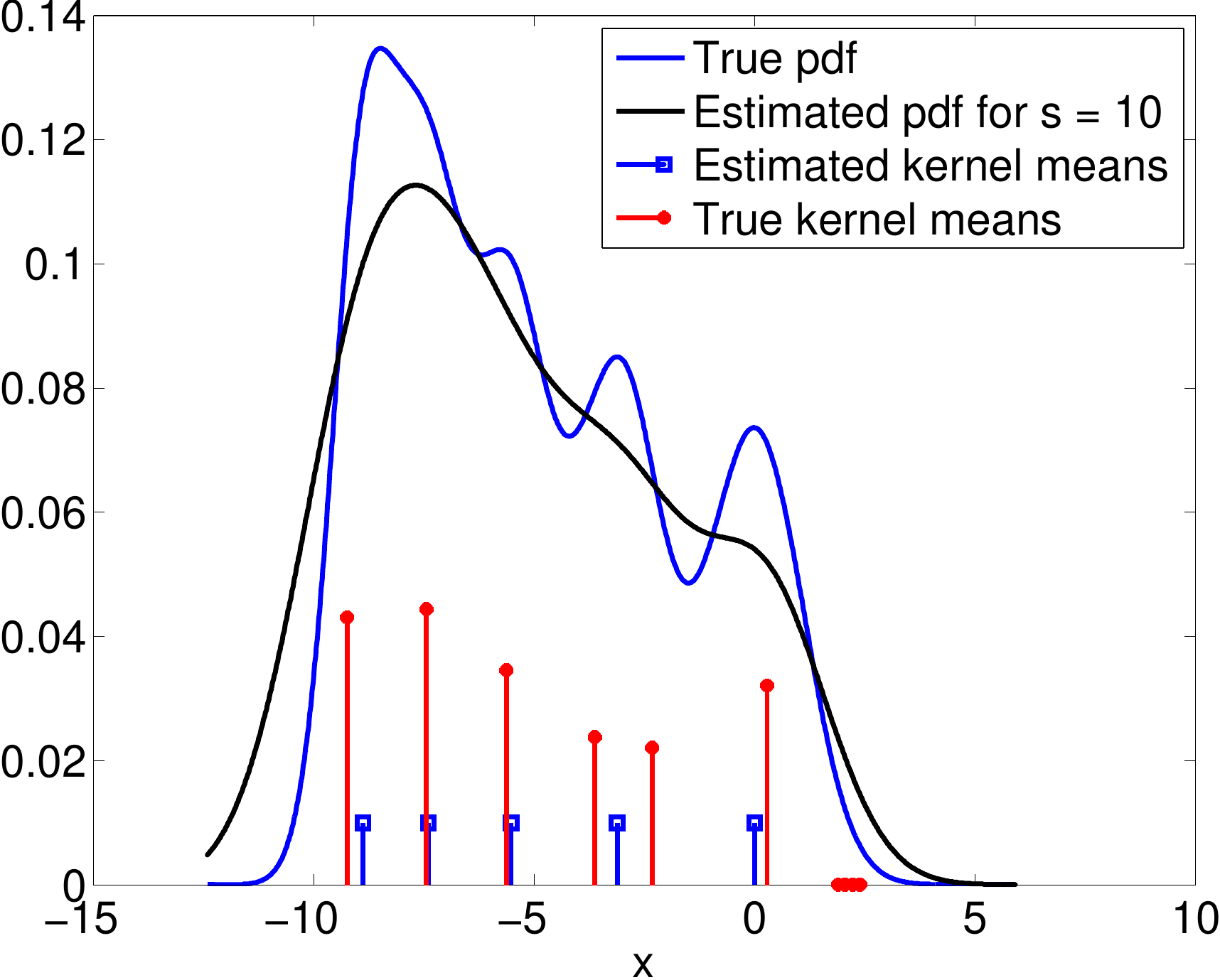}}
   \end{minipage}
   \begin{minipage}[t]{0.24\textwidth}
      \vspace{0pt}
      \subfigure[$\sparsity = 15$]{\includegraphics[width=\linewidth]{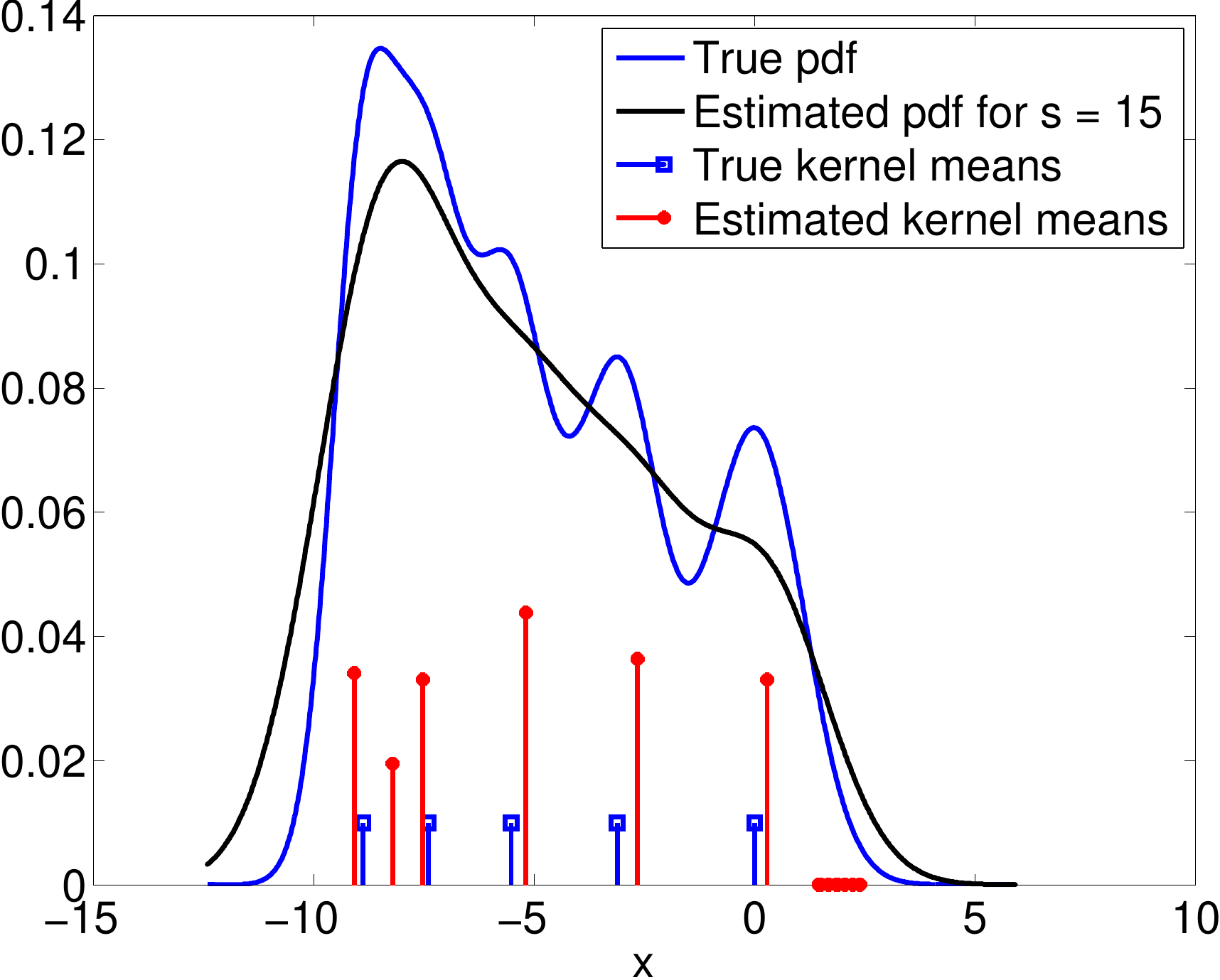}}
      \end{minipage} \vspace{-.3cm}
   \caption{\small\sl
Estimation results for different $\sparsity$: Red spikes depict the estimated kernel means as well as the their relative contribution to the Gaussian mixture. As $\sparsity$ increases, the additional nonzero coefficients in $\bestsignal$ tend to have small weights.} \label{fig: Exp7}
\vspace{-.3cm}
\end{figure*}

\newcommand{\pdf}{\mu}
\paragraph{Problem:} We study the kernel density learning setting: Let ${\bf x}^{(1)}, {\bf x}^{(2)},\dots, {\bf x}^{(n)} \in \mathbb{R}^{\dimension}$ be an $n$-size corpus of $\dimension$-dimensional samples, drawn from an unknown probability density function (pdf) $\pdf({\bf x})$. 
Here, we will form an estimator 
$\hat{\pdf}({\bf x}) := \sum_{i=1}^n \entry_i 
\kappa_{\sigma}({\bf x}, {\bf x}^{(i)}) $, 
where $\kappa_{\sigma}({\bf x}, {\bf y}) $ is a Gaussian kernel with parameter $\sigma$. 
Let us choose $\hat{\pdf}({\bf x})$ to minimize the integrated squared error criterion: $\text{ISE} = \mathbb{E}\vectornorm{\hat{\pdf}({\bf x}) - \pdf({\bf x})}_2^2$. As a result, we can introduce a density learning problem as estimating a weight vector $\bestsignal \in \Delta_{1}^{+}$. The objective can then be written as follows \cite{kim1995least,bunea2010spades}
\begin{align}
\bestsignal \in \argmin_{\signal \in \Delta_1^{+}} \left\{ \signal^T \boldsymbol{\Sigma} \signal - {\bf c}^T \signal\right\}, \label{eq:SDE}
\end{align}
where $\boldsymbol{\Sigma} \in \R^{n \times n}$ with $\Sigma_{ij} = \kappa_{\sqrt{2}\sigma}({\bf x}^{(i)}, {\bf x}^{(j)})$, and \begin{align}
c_i = \frac{1}{n - 1}\sum_{j \neq i}\kappa_{\sigma}({\bf x}^{(i)}, {\bf x}^{(j)}), ~\forall i, j.
\end{align}

While the combination of the $-{\bf c}^T \signal$ term and the non-negativity constraint induces some sparsity, it may not be enough. 
To avoid overfitting or obtain interpretable results, one might control the level of solution sparsity \cite{bunea2010spades}. In this context, we extend \eqref{eq:SDE} to include cardinality constraints, i.e. $\bestsignal \in \Delta_{1}^{+} \cap \constraint$. 


\paragraph{Numerical experiments: }
We consider the following Gaussian mixture: $\pdf(x) = \frac{1}{5}\sum_{i = 1}^5 \kappa_{\sigma_i}(\signal_i, x)$, where $\sigma_i = (7/9)^i$ and $\signal_i = 14(\sigma_i - 1)$. A sample of $1000$ points is drawn from $\pdf(x)$. We compare the density estimation performance of: $(i)$ the Parzen  method \cite{parzen1962estimation}, $(ii)$ the quadratic programming formulation in \eqref{eq:SDE}, and $(iii)$ our cardinality-constrained version of \eqref{eq:SDE} using GSSP. While $\pdf(x)$ is constructed by kernels with various widths, we assume a constant width during the kernel estimation. In practice, the width is not known {\em a priori} but can be found using cross-validation techniques; for simplicity, we assume kernels with width $\sigma = 1$.

Figure \ref{fig: Exp6}(left) depicts the true pdf and the estimated densities using the Parzen  method and the quadratic programming approach. Moreover, the figure also includes a scaled plot of $1/\sigma_i$, indicating the height of the individual Gaussian mixtures. By default, the Parzen window method estimation interpolates $1000$ Gaussian kernels with centers around the sampled points to compute the estimate $\hat{\pdf}(x)$; unfortunately, neither the quadratic programming approach (as Figure \ref{fig: Exp6} (middle-top) illustrates) nor the Parzen window estimator results are easily {\it interpretable}  even though both approaches provide a good approximation of the true pdf.

Using our cardinality-constrained approach, we can significantly enhance the interpretability. This is because in the sparsity-constrained approach, we can control the number of estimated Gaussian components. Hence, if the model order is known \emph{a priori}, the non-convex approach can be extremely useful. 

To see this, we first show the coefficient profile of the sparsity based approach  for $\sparsity = 5$ in Figure \ref{fig: Exp6} (middle-bottom). Figure \ref{fig: Exp6} (right) shows the estimated pdf for $\sparsity = 5$ along with the positions of weight coefficients obtained by our sparsity enforcing approach. Note that most of the weights obtained concentrate around the true means, fully exploiting our prior information about the ingredients of $\pdf(x)$---this happens with rather high frequency in the experiments we conducted. Figure \ref{fig: Exp7} illustrates further estimated pdf's using our approach for various $\sparsity$. 
Surprisingly, the resulting solutions are still approximately $5$-sparse even if $k>5$, as the over-estimated coefficients are extremely small, and hence the sparse estimator is reasonably robust to inaccurate estimates of $k$.

\section{Application: Portfolio optimization} \label{sec:portfolio}
\paragraph{Problem:} Given a sample covariance matrix $\boldsymbol{\Sigma}$ and expected mean $\boldsymbol{\mu}$, the return-adjusted Markowitz mean-variance (MV) framework selects a portfolio $\bestsignal $ such  that
$
\bestsignal \in \argmin_{\signal \in \Delta_1^{+}} \left\{ \signal^T \boldsymbol{\Sigma} \signal - \tau\boldsymbol{\mu}^T \signal\right\},
$
where $\Delta_1^{+}$ encodes the normalized capital constraint, and $\tau$ trades off risk and return \cite{demiguel2009generalized,brodie2009sparse}. The solution $\bestsignal \in \Delta_1^{+}$ is the distribution of investments over the $\dimension$ available assets. 

\newcommand{\bsd}{\boldsymbol{\delta}_{\signal}}
While such solutions construct portfolios from scratch, a more realistic scenario  
is to  
incrementally adjust an existing portfolio as the market changes. Due to costs per transaction, we can naturally introduce cardinality constraints. 
In mathematical terms, let $\bar{\signal} \in \mathbb{R}^{\dimension}$ be the current portfolio selection. Given $\bar{\signal}$, we seek to adjust the current selection  $\signal = \bar{\signal} + \boldsymbol{\delta}_{\signal}$ such that $\vectornorm{\boldsymbol{\delta}_{\signal}}_0 \leq k$.  
This leads to the following optimization problem: 
\begin{align}\label{exp2:eq:1}\nonumber 
    \bsd^\ast \in
\argmin_{\bsd\in \Sigma_\sparsity \cap \Delta_\lambda} 
(\bar{\signal} + 
\bsd)^T \Sigma (\bar{\signal}+ \bsd )- \tau \mu^T (\bar{\signal}+ \bsd), 
\end{align} where $\lambda$ is the level of update, and $k$ controls the transactions costs. During an update, $\lambda=0$ would keep the portfolio value constant while $\lambda>0$ would  increase it. 

%
\paragraph{Numerical experiments:} To clearly highlight the impact of the non-convex projector, we create a synthetic portfolio update problem, where we know the solution. As in \cite{brodie2009sparse}, we cast this problem as a regression problem and synthetically generate $\obs = {\bf X} \bestsignal$ where $p=1000$ such that $\bestsignal \in \Delta_\lambda$ ($\lambda$ is chosen randomly), and $\vectornorm{\bestsignal}_0 = \sparsity$ for $\sparsity = 100$. 

Since in general we do not expect RIP assumptions to hold in portfolio optimization, our goal here is to refine the sparse solution of a state-of-the-art convex solver via \eqref{eq: projected gradient} in order to accommodate the strict sparsity and budget constraints. Hence, we first consider the basis pursuit criterion and solve it using SPGL1~\cite{SPGL}:
\begin{equation} \label{eq:BP}
    \minimize \vectornorm{\signal}_1~~\text{s.t.}~~\begin{bmatrix}
    {\bf X} \\ \ones^T/\sqrt{p} 
    \end{bmatrix}  
    \signal 
    = \begin{bmatrix} \obs \\  \lambda/\sqrt{p} \end{bmatrix}.
\end{equation}
The normalization by $1/\sqrt{\dimension}$ in the last equality gives the constraint matrix a better condition number, since otherwise it is too ill-conditioned for a first-order solver.

Almost none of the solutions to~\eqref{eq:BP} return a $\sparsity$-sparse solution. 
Hence, we initialize \eqref{eq: projected gradient} with the SPGL1 solution to meet the constraints. 
The update step in \eqref{eq: projected gradient} uses the GSHP algorithm.

Figure~\ref{fig:SparseHyper} shows the resulting relative errors $\vectornorm{\widehat{\signal} - \bestsignal}_2 / \vectornorm{\bestsignal}_2$. 
We see that not only does \eqref{eq: projected gradient} return a $\sparsity$-sparse solution, but that this solution is also closer to $\bestsignal$, particularly when the sample size is small.
As the sample size increases, the knowledge that $\bestsignal$ is $\sparsity$-sparse makes up a smaller percentage of what we know about the signal, so the gap between \eqref{eq:BP} and \eqref{eq: projected gradient} diminishes.
\begin{figure}[!t]
\centering
\includegraphics[width=.95\linewidth]{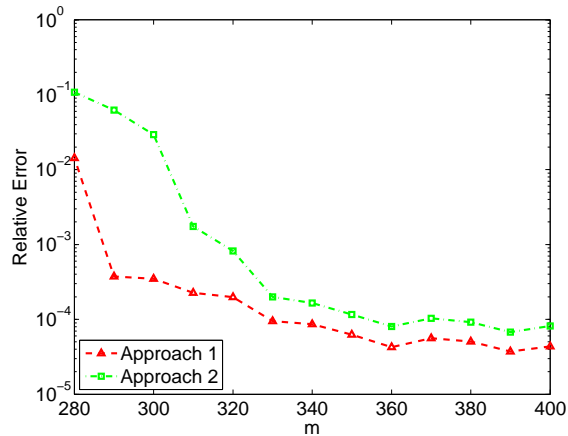}\vspace{-.15cm}
\caption{Relative error $\vectornorm{\widehat{\signal} - \bestsignal}_2 / \vectornorm{\bestsignal}_2$ comparison as a function of $\numsam$: Approach 1 is the non-convex approach \eqref{eq: projected gradient}, and approach 2 is \eqref{eq:BP}. Each point corresponds to the median value of 30 Monte-Carlo realizations.}
    \label{fig:SparseHyper}
\end{figure}

\section{Conclusions}\label{conclusion}
While non-convexity in learning algorithms is undesirable according to conventional wisdom, avoiding it might be difficult in many problems. In this setting, we show how to efficiently obtain exact sparse projections onto the positive simplex and its hyperplane extension. We empirically demonstrate that our projectors provide substantial accuracy benefits in quantum tomography from fewer measurements and enable us to exploit prior non-convex knowledge in density learning. Moreover, we also illustrate that we can refine the solution of well-established state-of-the-art convex sparse recovery algorithms to enforce non-convex constraints in sparse portfolio updates. The quantum tomography example in particular illustrates that the non-convex solutions can be extremely useful; here, the non-convexity appears milder, 
since a fixed-rank matrix still has extra degrees of freedom from the choice of its eigenvectors.
%

\section{Acknowledgements}
VC and AK's work was supported in part by the European Commission under Grant MIRG-268398, ERC Future Proof and SNF 200021-132548.
SRB is supported by the Fondation Sciences Math\'ematiques de Paris. CK's research is funded by ERC ALGILE.
\bibliography{recipes,mindexmodels,snf-QT-biblio}

\begin{thebibliography}{20}
\providecommand{\natexlab}[1]{#1}
\providecommand{\url}[1]{\texttt{#1}}
\expandafter\ifx\csname urlstyle\endcsname\relax
  \providecommand{\doi}[1]{doi: #1}\else
  \providecommand{\doi}{doi: \begingroup \urlstyle{rm}\Url}\fi

\bibitem[Bahmani et~al.(2011)Bahmani, Boufounos, and Raj]{bahmani2011greedy}
Bahmani, S., Boufounos, P., and Raj, B.
\newblock Greedy sparsity-constrained optimization.
\newblock In \emph{Signals, Systems and Computers (ASILOMAR), 2011 Conference
  Record of the Forty Fifth Asilomar Conference on}, pp.\  1148--1152. IEEE,
  2011.

\bibitem[Becker et~al.(2011)Becker, Cand\`es, and Grant]{becker2010templates}
Becker, Stephen, Cand\`es, Emmanuel, and Grant, Michael.
\newblock Templates for convex cone problems with applications to sparse signal
  recovery.
\newblock \emph{Mathematical Programming Computation}, pp.\  1--54, 2011.
\newblock ISSN 1867-2949.
\newblock URL \url{http://dx.doi.org/10.1007/s12532-011-0029-5}.
\newblock 10.1007/s12532-011-0029-5.

\bibitem[Brodie et~al.(2009)Brodie, Daubechies, De~Mol, Giannone, and
  Loris]{brodie2009sparse}
Brodie, J., Daubechies, I., De~Mol, C., Giannone, D., and Loris, I.
\newblock Sparse and stable {M}arkowitz portfolios.
\newblock \emph{Proceedings of the National Academy of Sciences}, 106\penalty0
  (30):\penalty0 12267--12272, 2009.

\bibitem[Bunea et~al.(2010)Bunea, Tsybakov, Wegkamp, and
  Barbu]{bunea2010spades}
Bunea, F., Tsybakov, A.B., Wegkamp, M.H., and Barbu, A.
\newblock {SPADES} and mixture models.
\newblock \emph{The Annals of Statistics}, 38\penalty0 (4):\penalty0
  2525--2558, 2010.

\bibitem[Cand\`es et~al.(2006)Cand\`es, Romberg, and Tao]{candes06robust}
Cand\`es, E., Romberg, J., and Tao, T.
\newblock Robust uncertainty principles: Exact signal reconstruction from
  highly incomplete frequency information.
\newblock \emph{IEEE Trans. on Information Theory}, 52\penalty0 (2):\penalty0
  489 -- 509, February 2006.

\bibitem[DeMiguel et~al.(2009)DeMiguel, Garlappi, Nogales, and
  Uppal]{demiguel2009generalized}
DeMiguel, V., Garlappi, L., Nogales, F.J., and Uppal, R.
\newblock A generalized approach to portfolio optimization: Improving
  performance by constraining portfolio norms.
\newblock \emph{Management Science}, 55\penalty0 (5):\penalty0 798--812, 2009.

\bibitem[Flammia et~al.(2012)Flammia, Gross, Liu, and
  Eisert]{flammia2012quantum}
Flammia, S.T., Gross, D., Liu, Y.K., and Eisert, J.
\newblock Quantum tomography via compressed sensing: error bounds, sample
  complexity and efficient estimators.
\newblock \emph{New Journal of Physics}, 14\penalty0 (9):\penalty0 095022,
  2012.
\newblock URL \url{http://stacks.iop.org/1367-2630/14/i=9/a=095022}.

\bibitem[Foucart(2010)]{foucart2010sparse}
Foucart, S.
\newblock Sparse recovery algorithms: sufficient conditions in terms of
  restricted isometry constants.
\newblock \emph{Proceedings of the 13th International Conference on
  Approximation Theory}, 2010.

\bibitem[Garg \& Khandekar(2009)Garg and Khandekar]{garg2009gradient}
Garg, R. and Khandekar, R.
\newblock Gradient descent with sparsification: an iterative algorithm for
  sparse recovery with restricted isometry property.
\newblock In \emph{ICML}. ACM, 2009.

\bibitem[Gross et~al.(2010)Gross, Liu, Flammia, Becker, and
  Eisert]{QuantumTomoPRL}
Gross, D., Liu, Y.-K., Flammia, S.~T., Becker, S., and Eisert, J.
\newblock Quantum state tomography via compressed sensing.
\newblock \emph{Phys. Rev. Lett.}, 105\penalty0 (15):\penalty0 150401, Oct
  2010.
\newblock \doi{10.1103/PhysRevLett.105.150401}.

\bibitem[Kim(1995)]{kim1995least}
Kim, D.
\newblock \emph{Least squares mixture decomposition estimation}.
\newblock PhD thesis, 1995.

\bibitem[Kyrillidis \& Cevher(2011)Kyrillidis and
  Cevher]{KyrillidisCevherRecipes}
Kyrillidis, A. and Cevher, V.
\newblock Recipes on hard thresholding methods.
\newblock Dec. 2011.

\bibitem[Kyrillidis \& Cevher(2012)Kyrillidis and Cevher]{clash}
Kyrillidis, A. and Cevher, V.
\newblock Combinatorial selection and least absolute shrinkage via the
  \textsc{Clash} algorithm.
\newblock In \emph{IEEE International Symposium on Information Theory}, July
  2012.

\bibitem[Liu(2011)]{liu2011universal}
Liu, Y.K.
\newblock Universal low-rank matrix recovery from {P}auli measurements.
\newblock In \emph{NIPS}, pp.\  1638--1646, 2011.

\bibitem[Meka et~al.(2010)Meka, Jain, and Dhillon]{SVP}
Meka, Raghu, Jain, Prateek, and Dhillon, Inderjit~S.
\newblock Guaranteed rank minimization via singular value projection.
\newblock In \emph{NIPS Workshop on Discrete Optimization in Machine Learning},
  2010.

\bibitem[Mirsky(1960)]{mirsky1960symmetric}
Mirsky, L.
\newblock Symmetric gauge functions and unitarily invariant norms.
\newblock \emph{The quarterly journal of mathematics}, 11\penalty0
  (1):\penalty0 50--59, 1960.

\bibitem[Nemhauser \& Wolsey(1988)Nemhauser and Wolsey]{nemhauser1988integer}
Nemhauser, G.L. and Wolsey, L.A.
\newblock \emph{Integer and combinatorial optimization}, volume~18.
\newblock Wiley New York, 1988.

\bibitem[Parzen(1962)]{parzen1962estimation}
Parzen, E.
\newblock On estimation of a probability density function and mode.
\newblock \emph{The annals of mathematical statistics}, 33\penalty0
  (3):\penalty0 1065--1076, 1962.

\bibitem[Pilanci et~al.(2012)Pilanci, El~Ghaoui, and
  Chandrasekaran]{pilanci2012recovery}
Pilanci, M., El~Ghaoui, L., and Chandrasekaran, V.
\newblock Recovery of sparse probability measures via convex programming.
\newblock In \emph{Advances in Neural Information Processing Systems 25}, pp.\
  2429--2437, 2012.

\bibitem[van~den Berg \& Friedlander(2008)van~den Berg and Friedlander]{SPGL}
van~den Berg, E. and Friedlander, M.~P.
\newblock Probing the {P}areto frontier for basis pursuit solutions.
\newblock \emph{SIAM J. Sci. Comput.}, 31\penalty0 (2):\penalty0 890--912,
  2008.

\end{thebibliography}
\bibliographystyle{icml2013}

\end{document}